\def \ie{$i.e.$}
\def \etal{$et~al.$}
\def \eg{$e.g.$}
\def \calA{\mathcal{A}}
\def \calB{\mathcal{B}}
\def \calC{\mathcal{C}}
\def \calD{\mathcal{D}}
\def \calE{\mathcal{E}}
\def  \calI{\mathcal{I}}
\def  \calP{\mathcal{P}}
\def  \calQ{\mathcal{Q}}
\def  \calU{\mathcal{U}}
\def  \calR{\mathcal{R}}
\def  \calS{\mathcal{S}}
\def  \calV{\mathcal{V}}
\def  \calT{\mathcal{T}}
\def  \calX{\mathcal{X}}
\def  \calY{\mathcal{Y}}
\def  \calW{\mathcal{W}}
\def  \calM{\mathcal{M}}
\def  \calL{\mathcal{L}}
\def  \calO{\mathcal{O}}
\def \calZ{\mathcal{Z}}
\def \bbC{\mathbb{C}}
\def \bbR{\mathbb{R}}
\def  \fft{\texttt{fft}}
\def  \ifft{\texttt{ifft}}
\def \old{\textnormal{old}}
\def\NC {\varphi_{\lambda,\gamma}(t)}
\def\NCg {\varphi_{1,\gamma}(t)}
\def\SCAD {\varphi_{\lambda,\gamma}^{\mathrm{SCAD}}(t)}
\def\SCADone {\varphi_{\lambda,\gamma_1}^{\mathrm{SCAD}}(t)}
\def\SCADtwo {\varphi_{\lambda,\gamma_2}^{\mathrm{SCAD}}(t)}
\def\MCP {\varphi_{\lambda,\gamma}^{\mathrm{MCP}}(t)}
\def\MCPone {\varphi_{\lambda,\gamma_1}^{\mathrm{MCP}}(t)}
\def\MCPtwo {\varphi_{\lambda,\gamma_2}^{\mathrm{MCP}}(t)}
\newtheorem{thm}{Theorem}[section]
\newtheorem{lem}[thm]{Lemma}
\newtheorem{prop}[thm]{Proposition}
\newtheorem{defn}{Definition}[section]
\begin{document}
%
% paper title
% Titles are generally capitalized except for words such as a, an, and, as,
% at, but, by, for, in, nor, of, on, or, the, to and up, which are usually
% not capitalized unless they are the first or last word of the title.
% Linebreaks \\ can be used within to get better formatting as desired.
% Do not put math or special symbols in the title.
\title{T-SVD Based Non-convex Tensor Completion and Robust Principal Component Analysis}

% author names and affiliations
% use a multiple column layout for up to three different
% affiliations
\author{\IEEEauthorblockN{Tao Li}
\IEEEauthorblockA{Department of Information Science\\School of Mathematical Sciences and LMAM\\
Peking University, Beijing, China\\
%Email: {\tt\small li\_tao@.pku.edu.cn}
}
\and
\IEEEauthorblockN{Jinwen Ma}
\IEEEauthorblockA{Department of Information Science\\ School of Mathematical Sciences and LMAM\\
Peking University, Beijing, China\\
Email: {\tt\small jwma@math.pku.edu.cn}
}
}

% conference papers do not typically use \thanks and this command
% is locked out in conference mode. If really needed, such as for
% the acknowledgment of grants, issue a \IEEEoverridecommandlockouts
% after \documentclass

% for over three affiliations, or if they all won't fit within the width
% of the page, use this alternative format:
%
%\author{\IEEEauthorblockN{Michael Shell\IEEEauthorrefmark{1},
%Homer Simpson\IEEEauthorrefmark{2},
%James Kirk\IEEEauthorrefmark{3},
%Montgomery Scott\IEEEauthorrefmark{3} and
%Eldon Tyrell\IEEEauthorrefmark{4}}
%\IEEEauthorblockA{\IEEEauthorrefmark{1}School of Electrical and Computer Engineering\\
%Georgia Institute of Technology,
%Atlanta, Georgia 30332--0250\\ Email: see http://www.michaelshell.org/contact.html}
%\IEEEauthorblockA{\IEEEauthorrefmark{2}Twentieth Century Fox, Springfield, USA\\
%Email: homer@thesimpsons.com}
%\IEEEauthorblockA{\IEEEauthorrefmark{3}Starfleet Academy, San Francisco, California 96678-2391\\
%Telephone: (800) 555--1212, Fax: (888) 555--1212}
%\IEEEauthorblockA{\IEEEauthorrefmark{4}Tyrell Inc., 123 Replicant Street, Los Angeles, California 90210--4321}}

% use for special paper notices
%\IEEEspecialpapernotice{(Invited Paper)}

% make the title area
\maketitle

% As a general rule, do not put math, special symbols or citations
% in the abstract
\begin{abstract}
Tensor completion and robust principal component analysis have been widely used in machine learning while the key problem relies on the minimization of a tensor rank that is very challenging. A common way to tackle this difficulty is to approximate the tensor rank with the $\ell_1-$norm of singular values based on its Tensor Singular Value Decomposition (T-SVD). Besides, the sparsity of a tensor is also measured by its $\ell_1-$norm. However, the $\ell_1$ penalty is essentially biased and thus the result will deviate. In order to sidestep the bias, we propose a novel non-convex tensor rank surrogate function and a novel non-convex sparsity measure. In this new setting by using the concavity instead of the convexity, a majorization minimization algorithm is further designed for tensor completion and robust principal component analysis. Furthermore, we analyze its theoretical properties.  Finally, the experiments on natural and hyperspectral images demonstrate the efficacy and efficiency of our proposed method. 
\end{abstract}

% no keywords

% For peer review papers, you can put extra information on the cover
% page as needed:
% \ifCLASSOPTIONpeerreview
% \begin{center} \bfseries EDICS Category: 3-BBND \end{center}
% \fi
%
% For peerreview papers, this IEEEtran command inserts a page break and
% creates the second title. It will be ignored for other modes.
\IEEEpeerreviewmaketitle

%===========================================================
\section{Introduction}
%------------------------------------------------------------------------
Tensors \cite{kolda}, or multi-way arrays,  have been extensively used in computer vision \cite{SNN,BCPF}, signal processing and machine learning \cite{cichocki2015tensor,sidiropoulos2017tensor}. %For example, in image processing, a color image is a 3-order tensor of  $height\times weight\times channel$, a multispectral image is a 3-order tensor of $height\times weight\times band$. 
Due to technical reasons, tensors in most applications are incomplete or polluted. Generically, recovering a tensor from corrupted observations is an inverse problem, which is ill-posed without prior knowledge. However, in real applications, entries in a tensor are usually highly correlated, which means a high-dimensional tensor is intrinsically determined by low-dimensional factors. Exploiting such low-dimensional structure makes it possible to restore tensors from limited or corrupted observations. Mathematically, this prior knowledge is equivalent to assume the tensors are low-rank.% Many tensor recovery methods have been proposed based on this assumption.

In this work, we mainly consider two tensor recovery problems: tensor completion and tensor robust principal component analysis (TRPCA). The tensor completion problem is to estimate the missing values in tensors from partially observed data, while TRPCA aims to decompose a tensor into a low-rank tensor and sparse tensor.% In image processing, tensor completion corresponds to image inpainting, and TRPCA corresponds to image denoising.
In the case of 2-order tensor, \ie, the matrix case, both problems have been investigated thoroughly \cite{candes2009,cai2010singular,candes2011robust,wright2009robust}. Since the concept of a tensor is an extension of the matrix, it is natural to employ matrix recovery methods to tensors. Most matrix recovery methods are optimization-based, penalizing rank surrogate function, or/and certain sparsity measure. Similar methods have been developed for tensors Tensor $\ell_1-$norm is often used as sparsity measure. However, the concept of tensor rank is far more complicated than matrix rank, thus there are various surrogate functions for tensor rank, such as the sum of the nuclear norm (SNN) \cite{SNN}, tensor nuclear norm (TNN) \cite{TNN} and twisted tensor nuclear norm (t-TNN) \cite{tTNN}. %Generally speaking, these rank surrogate functions first transform a tensor to matrices, then using matrix nuclear norm to encode rank information. 

\begin{figure}[htb]
\vspace{-1em}
\centering
\begin{tabular}{c}
\newcommand{\nh}{1.5}
\newcommand{\nw}{1.5}
\newcommand{\nd}{1.5}
\begin{tikzpicture}[scale=0.8]
\begin{scope}[shift={(-0.9,0)}]
\coordinate (O) at (0,0,0);
\coordinate (A) at (0,\nh,0);
\coordinate (D) at (\nw,0,0);
\coordinate (E) at (\nw,\nh,0);
\coordinate (C) at (0,0,\nd);
\coordinate (B) at (0,\nh,\nd);
\coordinate (G) at (\nw,0,\nd);
\coordinate (F) at (\nw,\nh,\nd);

%\ndraw[red!60!black,fill=red!5] (O) -- (C) -- (G) -- (D) -- cycle;% Bottom Face
%\ndraw[red!60!black,fill=red!5] (O) -- (A) -- (E) -- (D) -- cycle;% Back Face
%\ndraw[red!60!black,fill=red!5] (O) -- (A) -- (B) -- (C) -- cycle;% Left Face
\draw[red!60!black,fill=red!10] (D) -- (E) -- (F) -- (G) -- (D);% Right Face
\draw[red!60!black,fill=red!10] (C) -- (B) -- (F) -- (G) -- (C);% Front Face
\draw[red!60!black,fill=red!10] (A) -- (B) -- (F) -- (E) --(A);% Top Face
\draw[red!60!black,dashed] (A)--(O) --(D);
\draw[red!60!black,dashed] (O) --(C);
\draw (0.5,-1,0) node {\begin{small}Incomplete tensor\end{small}};

%% patch 1
\coordinate (O) at (0.7,0.4,0);
\coordinate (A) at (1,0.4,0);
\coordinate (D) at (0.7,0.6,0);
\coordinate (E) at (1,0.6,0);
\coordinate (C) at (0.7,0.4,0.5*\nd);
\coordinate (B) at (1,0.4,0.5*\nd);
\coordinate (G) at (0.7,0.6,0.5*\nd);
\coordinate (F) at (1,0.6,0.5*\nd);
\draw[red!20!white,fill=white] (D) -- (E) -- (F) -- (G) -- cycle;% Right Face
\draw[red!20!white,fill=white] (C) -- (B) -- (F) -- (G) -- cycle;% Front Face
\draw[red!20!white,fill=white] (A) -- (B) -- (F) -- (E) -- cycle;% Top Face

%% patch 2
\coordinate (O) at (0.1,1,0.2);
\coordinate (A) at (0.3,1,0.2);
\coordinate (D) at (0.1,1.2,0.2);
\coordinate (E) at (0.3,1.2,0.2);
\coordinate (C) at (0.1,1,0.5*\nd);
\coordinate (B) at (0.3,1,0.5*\nd);
\coordinate (G) at (0.1,1.2,0.5*\nd);
\coordinate (F) at (0.3,1.2,0.5*\nd);
\draw[red!20!white,fill=white] (D) -- (E) -- (F) -- (G) -- cycle;% Right Face
\draw[red!20!white,fill=white] (C) -- (B) -- (F) -- (G) -- cycle;% Front Face
\draw[red!20!white,fill=white] (A) -- (B) -- (F) -- (E) -- cycle;% Top Face

%% patch 3
\coordinate (O) at (0.2,0.2,0.1);
\coordinate (A) at (0.4,0.2,0.1);
\coordinate (D) at (0.2,0.4,0.1);
\coordinate (E) at (0.4,0.4,0.1);
\coordinate (C) at (0.2,0.2,0.7*\nd);
\coordinate (B) at (0.4,0.2,0.7*\nd);
\coordinate (G) at (0.2,0.4,0.7*\nd);
\coordinate (F) at (0.4,0.4,0.7*\nd);
\draw[red!20!white,fill=white] (D) -- (E) -- (F) -- (G) -- cycle;% Right Face
\draw[red!20!white,fill=white] (C) -- (B) -- (F) -- (G) -- cycle;% Front Face
\draw[red!20!white,fill=white] (A) -- (B) -- (F) -- (E) -- cycle;% Top Face

%% patch 4
\coordinate (O) at (1.,1,0.1);
\coordinate (A) at (1.2,1.,0.1);
\coordinate (D) at (1.,1.2,0.1);
\coordinate (E) at (1.2,1.2,0.1);
\coordinate (C) at (1.,1.,0.7*\nd);
\coordinate (B) at (1.2,1.,0.7*\nd);
\coordinate (G) at (1.,1.2,0.7*\nd);
\coordinate (F) at (1.2,1.2,0.7*\nd);
\draw[red!20!white,fill=white] (D) -- (E) -- (F) -- (G) -- cycle;% Right Face
\draw[red!20!white,fill=white] (C) -- (B) -- (F) -- (G) -- cycle;% Front Face
\draw[red!20!white,fill=white] (A) -- (B) -- (F) -- (E) -- cycle;% Top Face

\coordinate (B) at (0,\nh,\nd);
\coordinate (F) at (\nw,\nh,\nd);
\coordinate (G) at (\nw,0,\nd);
\coordinate (E) at (\nw,\nh,0);
\draw[red!60!black] (F) -- (G);% Right Face
\draw[red!60!black] (B) -- (F) ;% Front Face
\draw[red!60!black] (E) -- (F) ;% Front Face
\end{scope}

\begin{scope}[line width=5pt]
\draw[gray!20,->] (0.8,0.65) -- (1.4,0.65);
\end{scope}

\begin{scope}[shift={(2.1,0)}]
\coordinate (O) at (0,0,0);
\coordinate (A) at (0,\nh,0);
\coordinate (D) at (\nw,0,0);
\coordinate (E) at (\nw,\nh,0);
\coordinate (C) at (0,0,\nd);
\coordinate (B) at (0,\nh,\nd);
\coordinate (G) at (\nw,0,\nd);
\coordinate (F) at (\nw,\nh,\nd);

%\ndraw[red!60!black,fill=red!5] (O) -- (C) -- (G) -- (D) -- cycle;% Bottom Face
%\ndraw[red!60!black,fill=red!5] (O) -- (A) -- (E) -- (D) -- cycle;% Back Face
%\ndraw[red!60!black,fill=red!5] (O) -- (A) -- (B) -- (C) -- cycle;% Left Face
\draw[red!60!black,fill=red!10] (D) -- (E) -- (F) -- (G) -- (D);% Right Face
\draw[red!60!black,fill=red!10] (C) -- (B) -- (F) -- (G) -- (C);% Front Face
\draw[red!60!black,fill=red!10] (A) -- (B) -- (F) -- (E) --(A);% Top Face
\draw[red!60!black,dashed] (A)--(O) --(D);
\draw[red!60!black,dashed] (O) --(C);
\draw (0.56,-1,0) node {\begin{small}Complete tensor\end{small}};
\end{scope}
\end{tikzpicture}\\
\newcommand{\nh}{1.5}
\newcommand{\nw}{1.5}
\newcommand{\nd}{1.5}

\begin{tikzpicture}[scale=0.8]
\begin{scope}[shift={(-1.2,0)}]
%% Block1
\coordinate (O) at (0,0,0);
\coordinate (A) at (0,\nh,0);
\coordinate (D) at (0.3*\nw,0,0);
\coordinate (E) at (0.3*\nw,\nh,0);
\coordinate (C) at (0,0,\nd);
\coordinate (B) at (0,\nh,\nd);
\coordinate (G) at (0.3*\nw,0,\nd);
\coordinate (F) at (0.3*\nw,\nh,\nd);

\draw[red!60!black,fill=orange!10] (D) -- (E) -- (F) -- (G) -- (D);% Right Face
\draw[red!60!black,fill=orange!10] (C) -- (B) -- (F) -- (G) -- (C);% Front Face
\draw[red!60!black,fill=orange!10] (A) -- (B) -- (F) -- (E) --(A);% Top Face

% Block2
\coordinate (O) at (0.3*\nw0,0,0);
\coordinate (A) at (0.3*\nw,\nh,0);
\coordinate (D) at (0.7*\nw,0,0);
\coordinate (E) at (0.7*\nw,\nh,0);
\coordinate (C) at (0.3*\nw,0,\nd);
\coordinate (B) at (0.3*\nw,\nh,\nd);
\coordinate (G) at (0.7*\nw,0,\nd);
\coordinate (F) at (0.7*\nw,\nh,\nd);

\draw[red!60!black,fill=red!10] (D) -- (E) -- (F) -- (G) -- (D);% Right Face
\draw[red!60!black,fill=red!10] (C) -- (B) -- (F) -- (G) -- (C);% Front Face
\draw[red!60!black,fill=red!10] (A) -- (B) -- (F) -- (E) --(A);% Top Face

% Block 3
\coordinate (O) at (0.7*\nw0,0,0);
\coordinate (A) at (0.7*\nw,\nh,0);
\coordinate (D) at (\nw,0,0);
\coordinate (E) at (\nw,\nh,0);
\coordinate (C) at (0.7*\nw,0,\nd);
\coordinate (B) at (0.7*\nw,\nh,\nd);
\coordinate (G) at (\nw,0,\nd);
\coordinate (F) at (\nw,\nh,\nd);

\draw[red!60!black,fill=blue!10] (D) -- (E) -- (F) -- (G) -- (D);% Right Face
\draw[red!60!black,fill=blue!10] (C) -- (B) -- (F) -- (G) -- (C);% Front Face
\draw[red!60!black,fill=blue!10] (A) -- (B) -- (F) -- (E) --(A);% Top Face

\draw (0.56,-1,0) node {\begin{small}Corrupted tensor\end{small}};

%%% Noise
\draw plot [mark=*, mark size=1,draw=blue!20] coordinates{(0,0,1)};
\draw plot [mark=*, mark size=1,draw=blue!20] coordinates{(0.2,0.5,1)};
\draw plot [mark=*, mark size=1,draw=blue!20] coordinates{(0.7,0.3,1)};
\draw plot [mark=*, mark size=1,draw=blue!20] coordinates{(1.1,0.2,1)};
\draw plot [mark=*, mark size=1,draw=blue!20] coordinates{(0.8,0.9,1.2)};
\draw plot [mark=*, mark size=1,draw=blue!20] coordinates{(0.27,0.15,0.61)};
\draw plot [mark=*, mark size=1,draw=blue!20] coordinates{(0.9,1.2,1.1)};
\draw plot [mark=*, mark size=1,draw=blue!20] coordinates{(1.2,0.9,1.3)};
\draw plot [mark=*, mark size=1,draw=blue!20] coordinates{(0.9,0.1,0.1)};
\draw plot [mark=*, mark size=1,draw=blue!20] coordinates{(0.8,0.1,0.2)};
\draw plot [mark=*, mark size=1,draw=blue!20] coordinates{(0.2,0.9,1.1)};
\draw plot [mark=*, mark size=1,draw=blue!20] coordinates{(0.3,0.8,1.5)};
\draw plot [mark=*, mark size=1,draw=blue!20] coordinates{(0.1,0.3,1.4)};
\draw plot [mark=*, mark size=1,draw=blue!20] coordinates{(0.5,1.6,1.1)};
\draw plot [mark=*, mark size=1,draw=blue!20] coordinates{(0.2,1.5,1.1)};
\draw plot [mark=*, mark size=1,draw=blue!20] coordinates{(0.9,1.4,0.8)};
\draw plot [mark=*, mark size=1,draw=blue!20] coordinates{(1.12,1.3,0.2)};
\draw plot [mark=*, mark size=1,draw=blue!20] coordinates{(1.5,0.5,1.3)};
\draw plot [mark=*, mark size=1,draw=blue!20] coordinates{(1.6,0.7,0.6)};
\draw plot [mark=*, mark size=1,draw=blue!20] coordinates{(1.4,1.2,0.9)};
\end{scope}

\begin{scope}[line width=5pt]
\draw[gray!20,->] (0.5,0.65) -- (1.2,0.65);
\end{scope}

\begin{scope}[shift={(1.9,0)}]
%% Block1
\coordinate (O) at (0,0,0);
\coordinate (A) at (0,\nh,0);
\coordinate (D) at (0.3*\nw,0,0);
\coordinate (E) at (0.3*\nw,\nh,0);
\coordinate (C) at (0,0,\nd);
\coordinate (B) at (0,\nh,\nd);
\coordinate (G) at (0.3*\nw,0,\nd);
\coordinate (F) at (0.3*\nw,\nh,\nd);

\draw[red!60!black,fill=orange!10] (D) -- (E) -- (F) -- (G) -- (D);% Right Face
\draw[red!60!black,fill=orange!10] (C) -- (B) -- (F) -- (G) -- (C);% Front Face
\draw[red!60!black,fill=orange!10] (A) -- (B) -- (F) -- (E) --(A);% Top Face

% Block2
\coordinate (O) at (0.3*\nw0,0,0);
\coordinate (A) at (0.3*\nw,\nh,0);
\coordinate (D) at (0.7*\nw,0,0);
\coordinate (E) at (0.7*\nw,\nh,0);
\coordinate (C) at (0.3*\nw,0,\nd);
\coordinate (B) at (0.3*\nw,\nh,\nd);
\coordinate (G) at (0.7*\nw,0,\nd);
\coordinate (F) at (0.7*\nw,\nh,\nd);

\draw[red!60!black,fill=red!10] (D) -- (E) -- (F) -- (G) -- (D);% Right Face
\draw[red!60!black,fill=red!10] (C) -- (B) -- (F) -- (G) -- (C);% Front Face
\draw[red!60!black,fill=red!10] (A) -- (B) -- (F) -- (E) --(A);% Top Face

% Block 3
\coordinate (O) at (0.7*\nw0,0,0);
\coordinate (A) at (0.7*\nw,\nh,0);
\coordinate (D) at (\nw,0,0);
\coordinate (E) at (\nw,\nh,0);
\coordinate (C) at (0.7*\nw,0,\nd);
\coordinate (B) at (0.7*\nw,\nh,\nd);
\coordinate (G) at (\nw,0,\nd);
\coordinate (F) at (\nw,\nh,\nd);

\draw[red!60!black,fill=blue!10] (D) -- (E) -- (F) -- (G) -- (D);% Right Face
\draw[red!60!black,fill=blue!10] (C) -- (B) -- (F) -- (G) -- (C);% Front Face
\draw[red!60!black,fill=blue!10] (A) -- (B) -- (F) -- (E) --(A);% Top Face

\draw (0.56,-1,0) node {\begin{small}Low-rank tensor\end{small}};
\end{scope}

\draw(3.8,0.5) node{+};

\begin{scope}[shift={(4.6,0)}]
\coordinate (O) at (0,0,0);
\coordinate (A) at (0,\nh,0);
\coordinate (D) at (\nw,0,0);
\coordinate (E) at (\nw,\nh,0);
\coordinate (C) at (0,0,\nd);
\coordinate (B) at (0,\nh,\nd);
\coordinate (G) at (\nw,0,\nd);
\coordinate (F) at (\nw,\nh,\nd);

%\draw[red!60!black,fill=red!0] (O) -- (C) -- (G) -- (D) -- cycle;% Bottom Face
%\draw[red!60!black,fill=red!0] (O) -- (A) -- (E) -- (D) -- cycle;% Back Face
%\draw[red!60!black,fill=red!0] (O) -- (A) -- (B) -- (C) -- cycle;% Left Face
\draw[red!60!black,fill=red!0] (D) -- (E) -- (F) -- (G) -- (D);% Right Face
\draw[red!60!black,fill=red!0] (C) -- (B) -- (F) -- (G) -- (C);% Front Face
\draw[red!60!black,fill=red!0] (A) -- (B) -- (F) -- (E) --(A);% Top Face
\draw[red!60!black,dashed] (A)--(O) --(D);
\draw[red!60!black,dashed] (O) --(C);

%%% Noise
\draw plot [mark=*, mark size=1,draw=blue!20] coordinates{(0,0,1)};
\draw plot [mark=*, mark size=1,draw=blue!20] coordinates{(0.2,0.5,1)};
\draw plot [mark=*, mark size=1,draw=blue!20] coordinates{(0.7,0.3,1)};
\draw plot [mark=*, mark size=1,draw=blue!20] coordinates{(1.1,0.2,1)};
\draw plot [mark=*, mark size=1,draw=blue!20] coordinates{(0.8,0.9,1.2)};
\draw plot [mark=*, mark size=1,draw=blue!20] coordinates{(0.27,0.15,0.61)};
\draw plot [mark=*, mark size=1,draw=blue!20] coordinates{(0.9,1.2,1.1)};
\draw plot [mark=*, mark size=1,draw=blue!20] coordinates{(1.2,0.9,1.3)};
\draw plot [mark=*, mark size=1,draw=blue!20] coordinates{(0.9,0.1,0.1)};
\draw plot [mark=*, mark size=1,draw=blue!20] coordinates{(0.8,0.1,0.2)};
\draw plot [mark=*, mark size=1,draw=blue!20] coordinates{(0.2,0.9,1.1)};
\draw plot [mark=*, mark size=1,draw=blue!20] coordinates{(0.3,0.8,1.5)};
\draw plot [mark=*, mark size=1,draw=blue!20] coordinates{(0.1,0.3,1.4)};
\draw plot [mark=*, mark size=1,draw=blue!20] coordinates{(0.5,1.6,1.1)};
\draw plot [mark=*, mark size=1,draw=blue!20] coordinates{(0.2,1.5,1.1)};
\draw plot [mark=*, mark size=1,draw=blue!20] coordinates{(0.9,1.4,0.8)};
\draw plot [mark=*, mark size=1,draw=blue!20] coordinates{(1.12,1.3,0.2)};
\draw plot [mark=*, mark size=1,draw=blue!20] coordinates{(1.5,0.5,1.3)};
\draw plot [mark=*, mark size=1,draw=blue!20] coordinates{(1.6,0.7,0.6)};
\draw plot [mark=*, mark size=1,draw=blue!20] coordinates{(1.4,1.2,0.9)};
\draw (0.5,-1,0) node {\begin{small}Sparse tensor\end{small}};

\end{scope}
\end{tikzpicture}\\
\end{tabular}
\caption{An illustration of tensor completion and TRPCA.}
\label{fig:tcandtrpca}
\vspace{-1em}
\end{figure}
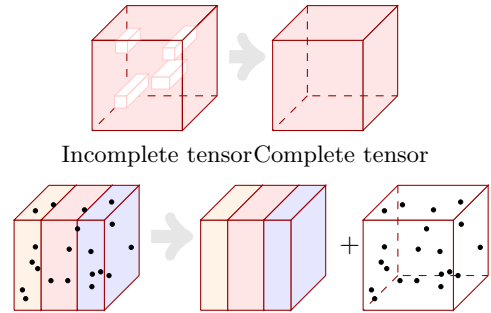

As in the matrix case, the choices of rank surrogate function and sparsity measure substantially influence the final results. The nuclear norm of a matrix is equivalent to the $\ell_1-$norm of its singular value. However, as indicated by Fan and Li \cite{SCAD}, $\ell_1-$norm over-penalizes large entries of vectors. 
%They also pointed out that an ideal penalty function should result in an estimator with three properties: unbiasedness, sparsity, and continuity. The $\ell_1-$norm doesn't satisfy unbiasedness.
 Smoothly clipped absolute deviation (SCAD) penalty \cite{SCAD} and minimax concave plus (MCP) penalty \cite{MCP} were proposed as ideal penalty functions, and their superiority over $\ell_1-$norm has been demonstrated in \cite{MCP,SCAD,shi2011,zhang2012}. This fact inspires us that nuclear norm based tensor rank surrogate functions and $\ell_1-$norm based tensor sparsity measure may suffer from a similar problem. To alleviate such phenomena, we propose to use non-convex penalties (SCAD and MCP) instead of an $\ell_1-$norm in TNN and tensor sparsity measures. 

However, the introduction of non-convex penalties makes optimization problems even harder to solve. For example, TNN based TRPCA \cite{TNN} is a convex optimization model, thus can be efficiently solved by alternating direction multiplier method (ADMM) \cite{ADMM}. Once we replace $\ell_1-$norm by SCAD or MCP, the problem is not convex anymore.%, and ADMM is not guaranteed to converge \cite{bertsekas2015convex} in such circumstances.
 Therefore, we apply the majorization minimization algorithm \cite{MM1,MM2} to solve the non-convex optimization problems and analyze the theoretical properties of these algorithms. Based on the proposed non-convex tensor completion and TRPCA models and their corresponding MM algorithms, we conduct experiments on natural images and multispectral images to validate the efficacy of the proposed methods.

%The remainder of this paper is organized as follows. In Section \ref{sec:relatedwork}, we review related work on tensor recovery and non-convex penalties. Section \ref{sec:notations} introduces notations and preliminaries. We propose a novel tensor rank surrogate function and sparsity measure together with some theoretical properties in Section \ref{sec:theoretical}. Then, we formulate the non-convex models for TC and TRPCA, devise corresponding MM algorithms in Section \ref{sec:tc} and Section \ref{sec:trpca} respectively. Extensive experimental results and analysis are reported in Section \ref{sec:experiments}. Finally, we give concluding remarks in Section \ref{sec:conclusion}.

%------------------------------------------------------------------------- 
\section{Related Work}\label{sec:relatedwork}
{\bf Tensor recovery.} For tensor completion, one seminal work is \cite{SNN}, in which SNN was proposed and three different algorithms for solving SNN based TC were devised. Zhao \etal proposed Bayesian CANDECOMP/PARAFAC (CP) tensor factorization model in \cite{BCPF}. The highlights of \cite{BCPF} include automatic rank determination property, full Bayesian treatment, and uncertainty quantification. Kilmer and Martin proposed a new tensor singular value decomposition (t-SVD) based on discrete Fourier transform for $3-$order tensors in \cite{kilmer2011,kilmer2013}. The key point is that t-SVD offers an efficient way to define tensor nuclear norm (TNN), which has been extensively used in tensor recovery recently \cite{ZeminZhang,TNN,zhou2017outlier,TNNPAMI}. Furthermore, Lu \etal proved the the exact recovery property of their proposed TRPCA model under certain suitable assumptions \cite{TNN,TNNPAMI}.

{\bf Non-convex penalties.} Wang and Zhang \cite{wang2013nonconvex} developed a non-convex optimization model for the low-rank matrix recovery problem. Cao \etal \cite{Folded} applied folded-concave penalties in SNN, while Ji \etal \cite{ji2017non} used log determinant penalty instead. Zhao \etal \cite{zhao2015novel} proposed to use the product of nuclear norm instead of the sum of the nuclear norm, which has a natural physical meaning. Besides, they also considered non-convex penalties such as SCAD and MCP. One major difference between our work and \cite{Folded,zhao2015novel} is that our methods are based on t-SVD, while \cite{Folded,zhao2015novel} transform a tensor to matrices simply via unfolding. Recently, Jiang \etal \cite{jiang2017novel} and Xu \etal \cite{xu2018laplace} introduced non-convex penalties to TNN, but neither MCP nor SCAD was considered. Besides, our work not only improves the tensor rank surrogate function but also modifies the tensor sparsity measure. Yokota \etal \cite{yokota2018missing} proposed a Tucker decomposition based non-convex tensor completion model for a case where all of the elements in some continuous slices are missing. Yao \etal \cite{yao2019efficient} also considered non-convex regularizers for tensor recovery and devised an efficient solver based on the proximal average algorithm. However, the non-convex penalties in \cite{yao2019efficient} are based on SNN, while ours are based on TNN.

\section{Notations and Preliminaries}\label{sec:notations}
%---------------------------------------------------------------
\subsection{Notations}
Throughout this paper, we use calligraphic letters to denote 3-way tensors, \eg, $\calA\in\bbC^{n_1\times n_2\times n_3}$. The $(i,j,k)$-th element of $\calA$ may be denoted by $\calA(i,j,k)$ or $\calA_{ijk}$ alternatively. The $k$-th frontal slice of $\calA$ is defined as $\calA(:,:,k)$, which is an $n_1\times n_2$ matrix. For brevity, we use $A^{(k)}$ to denote $\calA(:,:,k)$. The $(i,j)$-th tube of $\calA$ is defined as $\calA(i,j,:)$, which is a vector of length $n_3$. The inner product of two 3-way tensors $\calA,\calB\in \bbC^{n_1\times n_2\times n_3}$ is defined as $\langle\calA,\calB\rangle =\sum_{k}\text{Tr}((A^{(k)})^*B^{(k)})$. We use $|\calA|$ to denote the tensor with $(i,j,k)-$th element equals to $|\calA_{ijk}|$. Similar to vectors and matrices, we can also define various norms of tensors. We denote $\ell_1-$norm by $\|\calA\|_1 = \sum_{ijk}|\calA_{ijk}|$, $\ell_{\infty}-$norm by $\|\calA\|_{\infty} = \max_{ijk}|\calA_{ijk}|$ and Frobenius norm by $\|\calA\|_F=\sqrt{\sum_{ijk}|\calA_{ijk}|^2}$. An $n_1\times n_2\times n_3$ tensor $\calA$ can be transformed to an $(n_1n_3)\times (n_2n_3)$ block diagonal matrix $A$ whose blocks are the frontal slices $A^{(1)},\cdots,A^{(n_3)}$. Both this transform and its inverse are denoted by $\texttt{bdiag}$, and the meaning can be understood according to the type of the input.

Discrete Fourier transform (DFT) and inverse discrete Fourier transform (IDFT) are essential to the definitions in Section \ref{tp and tsvd}. We use $\fft$ and $\ifft$ to denote DFT and IDFT applying to each tube of a 3-way tensor. We define $\overline{\calA}=\fft(\calA,3)$, and it is obvious that $\calA = \ifft(\overline{\calA},,3)$. Furthermore, we use $\overline{A}=\texttt{bdiag}(\overline{\calA})$ to denote the block diagonal matrix whose blocks are frontal slices of $\overline{\calA}$. With a little abuse of terminology, we say $\calA$ is in original domain and $\overline{\calA}$ (or equivalently $\overline{A}$) is in transformation domain or Fourier domain.

%------------------------------------------------------------------------
\subsection{T-Product and T-SVD}\label{tp and tsvd}

\begin{defn}[\bf{T-product}]\cite{kilmer2011,TNNPAMI}\label{tp}
Suppose $\calA\in\bbC^{n_1\times m\times n_3}$ and $\calB\in \bbC^{m\times n_2\times n_3}$, then the t-product $\calA*\calB\in\bbC^{n_1\times n_2\times n_3}$ is defined as 
\begin{equation}
\calA*\calB = \textnormal{\ifft}(\texttt{bdiag}(\overline{A}\overline{B}),3)
\end{equation}
\end{defn}
Note that Definition \ref{tp} is different from \cite{kilmer2011,TNNPAMI} in form, but it is equivalent to the standard definitions. The reason why we choose this form is to avoid some cumbersome notations and better reveal the relationship between original domain and transformation domain. We may regard t-product as transforming the tensors by DFT, then multiplying corresponding frontal slices in Fourier domain, and finally transforming the result back to original domain by IDFT. It has been proved in \cite{kilmer2011,TNNPAMI} that $\calC=\calA*\calB$ is equivalent to $\overline{C}=\overline{A}\overline{B}$.

Before we introduce T-SVD, we need some further definitions, which are direct extensions of the corresponding definitions in the matrix case.

\begin{defn}[\bf{Conjugate transpose}]\cite{kilmer2011,TNNPAMI} Suppose $\calA \in \bbC^{n_1\times n_2\times n_3}$, the conjugate transpose of $\calA$ is denoted by $\calA^*\in\bbC^{n_2\times n_1\times n_3}$ whose first frontal slice equals to $\big(A^{(1)}\big)^*$ and whose $k-$th frontal slice ($k=2,3\cdots,n_3$) equals to $\big(A^{(n_3+2-k)}\big)^*$.
\end{defn}

\begin{defn}[\bf{Identity tensor}]\cite{kilmer2011,TNNPAMI}
The identity tensor $\calI\in \bbR^{n\times n\times n_3}$ is the tensor whose first frontal slice is the $n\times n$ identity matrix and whose other slices are zeros.
\end{defn}

\begin{defn}[\bf{Orthogonal tensor}]\cite{kilmer2011,TNNPAMI}
A tensor $\calQ\in \bbR^{n\times n\times n_3}$ is orthogonal if $\calQ*\calQ^*=\calQ^**\calQ=\calI$.
\end{defn}

\begin{defn}[\bf{F-diagonal}] \cite{kilmer2011,TNNPAMI}
A tensor is said to be f-diagonal if its every frontal slice is a diagonal matrix.
\end{defn}

\begin{thm}[\bf{T-SVD}]\cite{kilmer2011,TNNPAMI}
Suppose $\calA\in\bbR^{n_1\times n_2\times n_3}$. Then there exists tensors $\calU\in \bbR^{n_1\times n_1\times n_3},\calV\in \bbR^{n_2\times n_2\times n_3}$ and $\calS \in \bbR^{n_1\times n_2\times n_3}$ such that $\calA = \calU*\calS*\calV^*$. Furthermore, $\calU$ and $\calV$ are orthogonal, while $\calS$ is f-diagonal.
\end{thm}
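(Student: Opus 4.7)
The plan is to prove T-SVD by lifting the problem to the Fourier domain, performing ordinary matrix SVDs slice-wise, and pulling the decomposition back to the original domain via IDFT. This is natural because, as noted right after Definition 3.1, the t-product diagonalizes into block-wise matrix multiplication under $\fft$: if $\calC = \calA * \calB$ then $\overline{C} = \overline{A}\,\overline{B}$, which means that $\overline{C}^{(k)} = \overline{A}^{(k)} \overline{B}^{(k)}$ for every frontal slice $k$.

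First, I would set $\overline{\calA} = \fft(\calA,[\,],3)$ and compute, for each $k = 1,\dots,n_3$, a matrix SVD $\overline{A}^{(k)} = U_k \Sigma_k V_k^*$ with $U_k \in \bbC^{n_1\times n_1}$, $V_k \in \bbC^{n_2\times n_2}$ unitary and $\Sigma_k \in \bbR^{n_1\times n_2}$ diagonal with nonnegative entries. Stacking these as frontal slices defines tensors $\overline{\calU}, \overline{\calS}, \overline{\calV}$, and then I would set $\calU = \ifft(\overline{\calU},[\,],3)$, $\calS = \ifft(\overline{\calS},[\,],3)$, $\calV = \ifft(\overline{\calV},[\,],3)$. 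By construction $\overline{U}\,\overline{S}\,\overline{V}^* = \overline{A}$ (block-diagonally), so applying $\fold$ and IDFT yields $\calU * \calS * \calV^* = \calA$. The orthogonality of $\calU$ and $\calV$ follows because orthogonality in the original domain is equivalent to each frontal slice of $\overline{\calU}$ (resp.\ $\overline{\calV}$) being unitary, which holds by construction; the f-diagonality of $\calS$ is immediate from each $\Sigma_k$ being diagonal, which is preserved by the tube-wise IDFT.

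The one delicate point, and the main obstacle, is ensuring that $\calU,\calS,\calV$ are \emph{real} tensors, since the theorem asserts $\calU \in \bbR^{n_1\times n_1\times n_3}$ etc. Because $\calA$ is real, the DFT along the third dimension enforces conjugate symmetry $\overline{A}^{(k)} = \overline{\big(A^{(n_3+2-k)}\big)}$ for $k=2,\dots,n_3$ (with $\overline{A}^{(1)}$ real). To guarantee that the output has the same symmetry, I would compute the SVDs independently only for $k = 1,\dots,\lceil (n_3+1)/2\rceil$ and then \emph{define} the remaining slices by $U_k = \overline{U_{n_3+2-k}}$, $\Sigma_k = \Sigma_{n_3+2-k}$, $V_k = \overline{V_{n_3+2-k}}$. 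This choice is consistent with the factorization $\overline{A}^{(k)} = U_k \Sigma_k V_k^*$ on the mirrored slices and produces $\overline{\calU},\overline{\calS},\overline{\calV}$ with the same conjugate-symmetry pattern as $\overline{\calA}$, so IDFT returns real tensors. For odd $n_3$ the middle index is self-symmetric, and for even $n_3$ the index $k = n_3/2 + 1$ is also self-symmetric, so in both cases the corresponding slice is real and an ordinary real SVD can be used there.

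Finally, I would verify that the unitarity of each $U_k, V_k$ constructed in this symmetric way implies $\calU * \calU^* = \calI$ and $\calV * \calV^* = \calI$: since $\overline{\calI}$ has every frontal slice equal to the identity matrix (this is a one-line check using the definition of $\calI$ and the DFT of its only nonzero tube), $\calU * \calU^* = \calI$ is equivalent in the Fourier domain to $U_k U_k^* = I$ for all $k$, which holds. The same argument handles $\calV$. No estimates or limiting arguments are needed; the whole proof is essentially a transport of matrix SVD through the unitary change of variables provided by the DFT along the third mode, with the reality constraint handled by conjugate-symmetric bookkeeping.
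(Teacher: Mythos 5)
Your proof is correct and takes exactly the route the paper itself only sketches and defers to \cite{kilmer2011,TNNPAMI} for: slice-wise matrix SVDs of $\overline{A}^{(k)}$ in the Fourier domain pulled back by the tube-wise IDFT, with the conjugate-symmetric bookkeeping ($U_{n_3+2-k}=\overline{U_k}$, $\Sigma_{n_3+2-k}=\Sigma_k$, $V_{n_3+2-k}=\overline{V_k}$) supplying precisely the realness guarantee that the paper flags as the delicate point when it warns that naive independent slice-wise SVDs ``may result in complex entries.'' One harmless slip: for odd $n_3$ there is no self-symmetric ``middle'' index besides $k=1$, since $n_3+2-k=k$ has the integer solution $k=n_3/2+1$ only when $n_3$ is even, so the extra real-SVD case you describe for odd $n_3$ never actually arises and the argument stands as written.
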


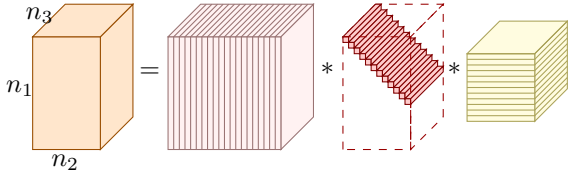
\begin{figure}[htb]
%\vspace{-0.5em}
\centering
\newcommand{\none}{2}
\newcommand{\ntwo}{1.2}
\newcommand{\nthree}{1.5}

\begin{tikzpicture}[scale=0.75]
%% A
\begin{scope}[shift={(-2.4,0)}]
\coordinate (O) at (0,0,0);
\coordinate (A) at (0,\none,0);
\coordinate (B) at (0,\none,\nthree);
\coordinate (C) at (0,0,\nthree);
\coordinate (D) at (\ntwo,0,0);
\coordinate (E) at (\ntwo,\none,0);
\coordinate (F) at (\ntwo,\none,\nthree);
\coordinate (G) at (\ntwo,0,\nthree);

%\draw[red!60!black,fill=red!5] (O) -- (C) -- (G) -- (D) -- cycle;% Bottom Face
%\draw[red!60!black,fill=red!5] (O) -- (A) -- (E) -- (D) -- cycle;% Back Face
%\draw[red!60!black,fill=red!5] (O) -- (A) -- (B) -- (C) -- cycle;% Left Face
\draw[orange!60!black,fill=orange!20] (D) -- (E) -- (F) -- (G) -- cycle;% Right Face
\draw[orange!60!black,fill=orange!20] (C) -- (B) -- (F) -- (G) -- cycle;% Front Face
\draw[orange!60!black,fill=orange!20] (A) -- (B) -- (F) -- (E) -- cycle;% Top Face

\draw (0,-0.8) node {$n_2$};
\draw (-0.8,0.5) node {$n_1$};
\draw (-0.45,1.8) node {$n_3$};
\end{scope}

\draw (-0.9,0.8) node {=};

%%% U
\begin{scope}
\coordinate (O) at (0,0,0);
\coordinate (A) at (0,\none,0);
\coordinate (B) at (0,\none,\nthree);
\coordinate (C) at (0,0,\nthree);
\coordinate (D) at (\none,0,0);
\coordinate (E) at (\none,\none,0);
\coordinate (F) at (\none,\none,\nthree);
\coordinate (G) at (\none,0,\nthree);

%\draw[red!60!black,fill=red!5] (O) -- (C) -- (G) -- (D) -- cycle;% Bottom Face
%\draw[red!60!black,fill=red!5] (O) -- (A) -- (E) -- (D) -- cycle;% Back Face
%\draw[red!60!black,fill=red!5] (O) -- (A) -- (B) -- (C) -- cycle;% Left Face
\draw[pink!60!black,fill=pink!20] (D) -- (E) -- (F) -- (G) -- cycle;% Right Face
\draw[pink!60!black,fill=pink!20] (C) -- (B) -- (F) -- (G) -- cycle;% Front Face
\draw[pink!60!black,fill=pink!20] (A) -- (B) -- (F) -- (E) -- cycle;% Top Face

\foreach \x in {0.1,0.2,...,1.9}{
	\coordinate (v1) at (\x,0,\nthree);
	\coordinate (v2) at (\x,\none,\nthree);
	\coordinate (v3) at (\x,\none,0);
	\draw[pink!60!black](v1)--(v2)--(v3);
}
\end{scope}

\draw (2.25,0.8) node {*};

%%% S
\begin{scope}[shift={(3.1,0)}]
\coordinate (O) at (0,0,0);
\coordinate (A) at (0,\none,0);
\coordinate (B) at (0,\none,\nthree);
\coordinate (C) at (0,0,\nthree);
\coordinate (D) at (\ntwo,0,0);
\coordinate (E) at (\ntwo,\none,0);
\coordinate (F) at (\ntwo,\none,\nthree);
\coordinate (G) at (\ntwo,0,\nthree);

%\draw[red!60!black,fill=red!5] (O) -- (C) -- (G) -- (D) -- cycle;% Bottom Face
%\draw[red!60!black,fill=red!5] (O) -- (A) -- (E) -- (D) -- cycle;% Back Face
%\draw[red!60!black,fill=red!5] (O) -- (A) -- (B) -- (C) -- cycle;% Left Face
\draw[dashed,red!60!black] (D) -- (E) -- (F) -- (G) -- (D);% Right Face
\draw[dashed,red!60!black] (C) -- (B) -- (F) -- (G) -- (C);% Front Face
\draw[dashed,red!60!black] (A) -- (B) -- (F) -- (E) --(A);% Top Face

\foreach \x in {0.1,0.2,...,1.2}{
\coordinate (O) at (\x-0.1,\none-\x,0);
\coordinate (A) at (\x,\none-\x,0);
\coordinate (D) at (\x-0.1,\none-\x+0.1,0);
\coordinate (E) at (\x,\none-\x+0.1,0);

\coordinate (B) at (\x,\none-\x,\nthree);
\coordinate (C) at (\x-0.1,\none-\x,\nthree);
\coordinate (F) at (\x,\none-\x+0.1,\nthree);
\coordinate (G) at (\x-0.1,\none-\x+0.1,\nthree);
%\draw[green!80!black,fill=green!10] (O) -- (C) -- (G) -- (D) -- cycle;% Bottom Face
%\draw[green!80!black,fill=green!10] (O) -- (A) -- (E) -- (D) -- cycle;% Back Face
%\draw[green!80!black,fill=green!10] (O) -- (A) -- (B) -- (C) -- cycle;% Left Face
\draw[red!50!black,fill=red!20] (D) -- (E) -- (F) -- (G) -- cycle;% Right Face
\draw[red!50!black,fill=red!20] (C) -- (B) -- (F) -- (G) -- cycle;% Front Face
\draw[red!50!black,fill=red!20] (A) -- (B) -- (F) -- (E) -- cycle;% Top Face

}
\coordinate (B) at (0,\none,\nthree);
\coordinate (F) at (\ntwo,\none,\nthree);
\coordinate (G) at (\ntwo,0,\nthree);
\draw[dashed,color=red!60!black] (F) -- (G);% Right Face
\draw[dashed,color=red!60!black] (B) -- (F) ;% Front Face

\end{scope}

\draw (4.5,0.8) node {*};

%%% V
\begin{scope}[shift={(5.3,0.5)}]
\coordinate (O) at (0,0,0);
\coordinate (A) at (0,\ntwo,0);
\coordinate (B) at (0,\ntwo,\nthree);
\coordinate (C) at (0,0,\nthree);
\coordinate (D) at (\ntwo,0,0);
\coordinate (E) at (\ntwo,\ntwo,0);
\coordinate (F) at (\ntwo,\ntwo,\nthree);
\coordinate (G) at (\ntwo,0,\nthree);

%\draw[red!60!black,fill=red!5] (O) -- (C) -- (G) -- (D) -- cycle;% Bottom Face
%\draw[red!60!black,fill=red!5] (O) -- (A) -- (E) -- (D) -- cycle;% Back Face
%\draw[red!60!black,fill=red!5] (O) -- (A) -- (B) -- (C) -- cycle;% Left Face
\draw[yellow!60!black,fill=yellow!15] (D) -- (E) -- (F) -- (G) -- cycle;% Right Face
\draw[yellow!60!black,fill=yellow!15] (C) -- (B) -- (F) -- (G) -- cycle;% Front Face
\draw[yellow!60!black,fill=yellow!15] (A) -- (B) -- (F) -- (E) -- cycle;% Top Face

\foreach \x in {0.1,0.2,...,1.2}{
	\coordinate (v1) at (\ntwo,\x,0);
	\coordinate (v2) at (\ntwo,\x,\nthree);
	\coordinate (v3) at (0,\x,\nthree);
	\draw[yellow!60!black](v1)--(v2)--(v3);
}
\end{scope}
\end{tikzpicture}
\caption{An illustration of the t-SVD of an $n_1\times n_2\times n_3$ tensor.}
\label{fig:tsvd}
\end{figure}

Note that $\calA = \calU*\calS*\calV^*$ in original domain is equivalent to $\overline{A}=\overline{U}\overline{S}\overline{V}^*$ in Fourier domain. Intuitively, we can obtain the T-SVD of $\calA$ by calculating SVD of each frontal slice $\overline{A}^{(k)}$ in frequency domain, \ie, $\overline{A}^{(k)}=\overline{U}^{(k)}\overline{S}^{(k)}(\overline{V}^{(k)})^*$, then transforming $\overline{\calU},\overline{\calS},\overline{\calV}$ to original domain by IDFT. However, as indicated in \cite{TNNPAMI}, this method may result in complex entries due to non-uniqueness of matrix SVD. We omit the detailed algorithm for calculating T-SVD due to space limit, and refer to \cite{TNNPAMI} for further discussions.

The concept of rank for tensors is very complicated. There are various definitions of tensor rank \cite{kolda,cichocki2015tensor,sidiropoulos2017tensor}, and most of them are NP-complete. The rank of a matrix is equivalent to the number of its non-zero singular values, and we often use nuclear norm (the sum of all singular values) as a surrogate function for matrix rank. Intuitively, we may extend the concept of the nuclear norm to the tensor case, and the extension may be a reasonable surrogate for tensor rank.

\begin{defn}[\bf{Tensor nuclear norm}]\cite{kilmer2011,TNNPAMI}\label{tnn}
Let $\calA =\calU*\calS*\calV^*$ be the t-SVD of $\calA$, the nuclear norm of $\calA$ is defined as $\|\calA\|_*=\sum_i\calS(i,i,1)$.
\end{defn}

It has been proved in \cite{TNNPAMI} that Definition \ref{tnn} is the convex envelope of tensor average rank. Besides, the tensor nuclear norm is the dual norm of the tensor spectral norm, which is consistent with the matrix case. At first glance, the definition above may be a little amazing since only the first frontal slice of $\calS$ is used. According to the definition of IDFT, we have $\calS(i,i,1) =\frac{1}{n_3} \sum_{k}\overline{\calS}(i,i,k)$. Thus, in the transformation domain, the tensor nuclear norm is equal to the sum of all singular values of all frontal slices up to a constant factor.

%------------------------------------------------------------------------
\subsection{Non-convex Penalties: SCAD and MCP}\label{scadandmcp}
As indicated in \cite{SCAD}, an ideal penalty function should result in an estimator with three properties: unbiasedness, sparsity and continuity. Smoothly clipped absolute deviation (SCAD) was proposed in \cite{SCAD} to improve the properties of the $\ell_1$ penalty, which does not satisfies the three properties simultaneously.
\begin{defn}[\bf{SCAD}]\cite{SCAD} For some $\gamma>2$ and $\lambda>0$, the SCAD function is given by
\begin{equation}
\varphi^{\mathrm{SCAD}}_{\lambda,\gamma}(t)=\begin{cases}
\lambda|t|\quad &\mathrm{if } |t|\le \lambda,\\
\frac{\gamma\lambda |t|-0.5(t^2+\lambda^2)}{\gamma-1}\quad &\mathrm{if } \lambda<|t|<\gamma\lambda, \\
\frac{\gamma+1}{2}\lambda^2 \quad &\mathrm{if }|t|>\gamma\lambda.
\end{cases}
\end{equation}
\end{defn}
A continuous, nearly unbiased and accurate variable selection penalty called minimax concave penalty (MCP) was proposed in \cite{MCP}. The precise definition is given as follows.

\begin{defn}[\bf{MCP}]\cite{MCP} For some $\gamma>1$ and $\lambda>0$, the MCP function is given by
\begin{equation}
\varphi^{\mathrm{MCP}}_{\lambda,\gamma}(t)=\begin{cases}
\lambda|t|-\frac{t^2}{2\gamma}\quad&\mathrm{if }|t|<\gamma\lambda,\\
 \frac{\gamma\lambda^2}{2}\quad&\mathrm{if } |t|\ge \gamma\lambda.
\end{cases}
\end{equation}
\end{defn}

It is well known that $\ell_1-$norm penalty over-penalizes large components. However, in SCAD and MCP, the penalty remains constant once the variable is larger than a threshold. Besides, we point out that as $\gamma\rightarrow \infty$, we have $\varphi^{\mathrm{SCAD}}_{\lambda,\gamma}(t)\rightarrow \lambda|t|$ and $\varphi^{\mathrm{MCP}}_{\lambda,\gamma}(t)\rightarrow \lambda|t|$ pointwisely. Last but not least, if we restrict $t\ge 0$, or equivalently view SCAD and MCP as functions of $|t|$, then they are concave functions. In the following, we use $\varphi_{\lambda,\gamma}(t)$ to denote SCAD or MCP alternatively. 

 The effects of $\lambda$ and $\gamma$ in SCAD and MCP can be understood intuitively by considering $\varphi_{\lambda,\gamma}(t)\rightarrow\lambda|t|$. Roughly, $\lambda$ controls the relative importance of the penalty, and $\gamma$ controls how similar is $\varphi_{\lambda,\gamma}(t)$ compared with $\lambda|t|$.

%-------------------------------------------------------------------------
\section{Theoretical Foundations}\label{sec:theoretical}
\subsection{A Novel Tensor Sparsity Measure}
The $\ell_1-$norm has been widely used as a sparsity measure in statistics, machine learning and computer vision. For tensors, the tensor $\ell_1-$norm plays a vital role in TRPCA \cite{zhao2015novel,TNN,TNNPAMI}. However, $\ell_1-$norm penalty over-penalizes larger entries and may result in biased estimator. Therefore, we propose to use SCAD or MCP instead of the $\ell_1-$norm penalty. The novel tensor sparsity measure is defined as
\begin{equation}
\Phi_{\lambda,\gamma}(\calA) = \sum_{i=1}^{n_1}\sum_{j=1}^{n_2}\sum_{k=1}^{n_3}\varphi_{\lambda,\gamma}(\calA_{ijk}).
\end{equation}
Here, we may set $\varphi_{\lambda,\gamma}$ to be $\varphi^{\mathrm{SCAD}}_{\lambda,\gamma}$ or $\varphi^{\mathrm{MCP}}_{\lambda,\gamma}$. We have the following properties.

\begin{prop} For $\calA\in\bbR^{n_1\times n_2\times n_3}$, $\Phi_{\lambda,\gamma}(\calA)$ satisfies:
\begin{enumerate}[(i)]
\item{} $\Phi_{\lambda,\gamma}(\calA)\ge 0$ with the equality holds iff $\calA=0$;
\item{} $\Phi_{\lambda,\gamma}(\calA)$ is concave with respect to $|\calA|$;
\item{} $\Phi_{\lambda,\gamma}(\calA)$ is increasing in $\gamma$, $\Phi_{\lambda,\gamma}(\calA)\le\lambda \|\calA\|_1$ and $\lim_{\gamma\rightarrow \infty}\Phi_{\lambda,\gamma}(\calA)=\lambda\|\calA\|_1$.
\end{enumerate}
\end{prop}

\subsection{A Novel Tensor Rank Penalty}
In this part, we always assume $\lambda=1$. Similar to tensor nuclear norm, we can apply SCAD or MCP to the singular values of a tensor. However, this may result in difficulty in optimization algorithms. Instead, we propose to apply penalty function to all singular values in Fourier domain. More precisely, suppose $\calA$ has t-SVD $\calA = \calU*\calS*\calV^*$, we define the $\gamma-$norm of $\calA$ as 
\begin{equation}
\|\calA\|_{\gamma} =\frac{1}{n_3}\sum_{i,k} \varphi_{1,\gamma}(\overline{\calS}(i,i,k)).
\end{equation}
The tensor $\gamma-$norm enjoys the following properties.
\begin{prop} For $\calA\in\bbR^{n_1\times n_2\times n_3}$, suppose $\calA$ has t-SVD $\calA = \calU*\calS*\calV^*$, then $\|\calA\|_{\gamma}$ satisfies:
\begin{enumerate}[(i)]
\item{} $\|\calA\|_{\gamma}\ge 0$ with equality holds iff $\calA=0$;
\item{} $\|\calA\|_{\gamma}$ is increasing in $\gamma$, $\|\calA\|_{\gamma}\le \|\calA\|_*$ and $\lim_{\gamma\rightarrow\infty}\|\calA\|_{\gamma}=\|\calA\|_*$;
\item{} $\|\calA\|_{\gamma}$ is concave with respect to $\{\overline{\calS}(i,i,k)\}_{i,k}$;
\item{} $\|\calA\|_{\gamma}$ is orthogonal invariant, \ie, for any orthogonal tensors $\mathcal{P}\in\bbR^{n_1\times n_1\times n_3},\calQ\in\bbR^{n_2\times n_2\times n_3}$, we have $\|\mathcal{P}*\calA*\calQ\|_{\gamma}=\|\calA\|_{\gamma}$.
\end{enumerate}
\end{prop}

\subsection{Generalized Thresholding Operators}
We will use majorization minimization algorithm in Section \ref{sec:tc} and \ref{sec:trpca}. In this part, we derive some properties that are vital to MM algorithm based on the concavity of SCAD and MCP. As mentioned in Section \ref{scadandmcp}, SCAP and MCP are continuous differentiable concave functions restricted on $[0,\infty)$, thus we can bound $\varphi_{\lambda,\gamma}(t)$ by its first-order Taylor expansion $\varphi_{\lambda,\gamma}(t_0)+\varphi'_{\lambda,\gamma}(t_0)(t-t_0)$. This observation leads to the following theorem.

\begin{thm}\label{upperbound}
We can view $\Phi_{\lambda,\mu}(\calX)$ as a function of $|\calX|$, and $\|\calX\|_{\gamma}$ as a function of $\{\overline{\calS}(i,i,k)\}_{i,k}$. For any $\calX^{\textnormal{old}}$, let
{\small\begin{equation}
\begin{aligned}
Q_{\lambda,\gamma}(\calX|\calX^{\textnormal{old}})&=\Phi_{\lambda,\gamma}(\calX^{\textnormal{old}})+\sum_{i,j,k}\varphi'_{\lambda,\gamma}(|\calX^{\textnormal{old}}_{ijk}|)(|\calX_{ijk}|-|\calX^{old}_{ijk}|),\\
Q_{\gamma}(\calX|\calX^{\textnormal{old}})&= \|\calX^{\textnormal{old}}\|_{\gamma}+\frac{1}{n_3}\sum_{i,k}\varphi_{1,\gamma}'(\overline{\calS}^{\old}_{iik})(\overline{\calS}_{iik}-\overline{\calS}^{\textnormal{old}}_{iik}),
\end{aligned}
\end{equation}}
then 
\begin{equation}
\begin{aligned}
Q_{\lambda,\gamma}(\calX^{\mathrm{old}}|\calX^{\mathrm{old}})=\Phi_{\lambda,\gamma}(\calX^{\mathrm{old}}),\Phi_{\lambda,\gamma}(\calX)\le Q_{\lambda,\gamma}(\calX|\calX^{\textnormal{old}}),\\
Q_{\gamma}(\calX^{\mathrm{old}}|\calX^{\mathrm{old}})=\|\calX^{\mathrm{old}}\|_{\gamma},\|\calX\|_{\gamma}\le Q_{\gamma}(\calX|\calX^{\textnormal{old}}).
\end{aligned}
\end{equation}
\end{thm}

Due to the concavity of $\Phi_{\lambda,\gamma}(\calX)$ and $\|\calX\|_{\gamma}$, optimization problems involving $\Phi_{\lambda,\gamma}(\calX)$ and $\|\calX\|_{\gamma}$ are generally extremely difficult to solve. However, optimizing upper bounds given in Theorem \ref{upperbound} instead is relatively easy. It's well-known that soft thresholding operator $\calT_{\lambda}(z)=\textnormal{sgn}(z)[|z|-\lambda]_+$ is the proximal operator of $\ell_1-$norm. In the following, we introduce generalized thresholding operators based on $\calT_{\lambda}$, then derive the proximal operators of $Q_{\lambda,\gamma}(\calX|\calX^{\old})$ and $Q_{\gamma}(\calX|\calX^{\old})$.

\begin{defn}[\bf{Generalized soft thresholding}]
Suppose $\calX,\calW\in\bbR^{n_1\times n_2\times n_3}$, the generalized soft thresholding operator is defined as
\begin{equation}
[\calT_{\calW}(\calX)]_{ijk} = \calT_{\calW_{ijk}}(\calX_{ijk}).
\end{equation}
\end{defn}

\begin{thm}\label{thm:generalizedthresholding}
For $\forall\mu>0$, let $\calW_{ijk}=\varphi_{\lambda,\gamma}'(|\calX^{\old}_{ijk}|)/\mu$, then 
\begin{equation}
\calT_{\calW}(\calY)=\arg\min_{\calX}Q_{\lambda,\gamma}(\calX|\calX^{\old})+\frac{\mu}{2}\|\calX-\calY\|_F^2.
\end{equation}
\end{thm}

\begin{defn}[\bf{Generalized t-SVT}]Suppose a 3-way tensor $\calY$ has t-SVD $\calY=\calU*\calS*\calV^*$, $\calW$ is a tensor with the same shape of $\calY$, the generalized tensor singular value thresholding operator is defined as 
\begin{equation}
\calD_{\calW}(\calY)=\calU*\tilde{\calS}*\calV^*,
\end{equation}
where $\tilde{\calS}=\textnormal{\ifft}(\calT_{\calW}(\calS),3)$.
\end{defn}

\begin{thm}\label{thm:generalizedtSVT}
For $\forall \mu>0$, let $\calW_{ijk}=\delta_{i}^j\varphi_{1,\gamma}'(\overline{\calS}^{\old}_{iik})/\mu$ where $\delta_i^j$ is the Kronecker symbol, then
\begin{equation}
\calD_{\calW}(\calY)=\arg\min_{\calX}Q_{\gamma}(\calX|\calX^{\old})+\frac{\mu}{2}\|\calX-\calY\|_F^2.
\end{equation}
\end{thm}

%-------------------------------------------------------------------------
\section{Proposed Non-convex Tensor Recovery Models and Algorithms}\label{sec:models}
\subsection{Non-convex Tensor Completion}\label{sec:tc}

\begin{algorithm}[t]  
\caption{MM algorithm for non-convex low-rank tensor completion}  
\label{alg:tcmm}  
{\bf Input:} $\Omega,\calO$\\
{\bf Hyper parameters:} $\gamma,\mu_0,\rho,\mu_{\text{max}}$\\
\begin{algorithmic}[1]
\STATE Initialize $\calX^{\old}=\calX^0$ by $\calO\circledast\Omega$ or other tensor completion algorithm.
\WHILE{ not converged}
\STATE Calculate $\overline{\calS}^{\old}$ and set $\calW^t_{ijk}=\delta_{i}^j\varphi_{1,\gamma}'(\overline{\calS}^{\old}_{iik})/\mu$.
\STATE Initialize $\calX_0^t=\calX^t,\calY_0^t=0$.
\WHILE{not converged}
	\STATE $\calM^{t}_{l+1}=\calD_{\calW^t}(\calX^t_l-\frac{1}{\mu_l}\calY^t_l)$
	\STATE $\calX^{t}_{l+1}=(\calM^t_{l+1}+\frac{1}	{\mu_l}\calY^t_l)\circledast(1-\Omega)+\calO\circledast\Omega$
	\STATE $\calY^{t}_{l+1} = \calY^t_l+\mu_l(\calM^t_{l+1}-\calX^t_{l+1})$
	\STATE $\mu_{l+1}=\min(\rho\mu_l,\mu_{\text{max}})$
\ENDWHILE
\STATE Update $\calX^{t+1}$ by the result of inner iteration
\STATE Set $\calX^{\old}=\calX^{t+1}$
\ENDWHILE
\end{algorithmic}  
\end{algorithm}  

Given a partially observed tensor $\calO\in \bbR^{n_1\times n_2\times n_3}$, tensor completion task aims to recover the full tensor $\calX$ which coincides with $\calO$ in the observed positions. Suppose the observed positions are indexed by $\Omega$, \ie, $\Omega_{ijk}=1$ denotes the $(i,j,k)-$th element is observed while $\Omega_{ijk}=0$ denotes the $(i,j,k)-$th element is unknown. Based on low rank assumption, tensor completion can be modeled as 
\begin{equation}\label{tcmodel1}
\min_{\calX} \textnormal{rank}(\calX)\quad \text{s.t. }%\Omega\circledast\calO=\Omega\circledast\calX.
\calO_{\Omega}=\calX_{\Omega}.
\end{equation}
Since the concept of rank is very complicated for tensors, many types of tensor rank or surrogate functions can be used in Equation \ref{tcmodel1}. Here, we use the proposed tensor $\gamma-$norm, 
\begin{equation}\label{tcmodel}
\min_{\calX} \|\calX\|_{\gamma}\quad \text{s.t. }\calO_{\Omega}=\calX_{\Omega}.
\end{equation}
We can set $\varphi_{1,\gamma}$ in Equation (\ref{tcmodel}) to be SCAD or MCP. In the following we refer these non-convex tensor completion models as $\text{LRTC}_{\mathrm{scad}}$ and $\text{LRTC}_{\mathrm{mcp}}$ respectively.

We apply majorization minimization algorithm to solve problem (\ref{tcmodel}). 
 Given $\calX^{\old}$, we minimize the upper bound of $\|\calX\|_{\gamma}$ given in Theorem \ref{upperbound}, which leads to
\begin{equation}\label{eq:mmtc}
\min_{\calX} Q_{\gamma}(\calX|\calX^{\old})\quad\text{s.t. } \calO_{\Omega}=\calX_{\Omega}.
\end{equation}
Problem (\ref{eq:mmtc}) is convex, thus we can use ADMM to solve it. Introducing auxiliary variable $\calM$ and let $D$ be the feasible domain $\{\calX|\calO_{\Omega}=\calX_{\Omega}\}$, then Equation (\ref{tcmodel}) is equivalent to 
\begin{equation}\label{eq:tcadmm}
\min_{\calX\in D} Q_{\gamma}(\calM|\calX^{\old})\quad\text{s.t. } \calM=\calX.
\end{equation}
Problem (\ref{eq:tcadmm}) is easy to solve by standard ADMM iterations, and the derivation of ADMM steps are omitted here for page limit and included in the supplemental matrial. Note that ADMM is the inner loop, after the ADMM converges we should update $\calX^{\old}$ and repeat ADMM iterations again. Detailed algorithm is described in Algorithm \ref{alg:tcmm}. We have the following theoretical guarantee for this algorithm.

\begin{thm}
	The iteration sequence generated by $\calX^{t+1}\in \arg\min_{\calO_{\Omega}=\calX_{\Omega}}Q_{\gamma}(\calX|\calX^t)$ is non-increasing, \ie, $\|\calX^{t+1}\|_{\gamma}\le \|\calX^t\|_{\gamma}$ and converges to some $Q^*$. Besides, there exists a subsequence $\{\calX^{i_k}\}_{k=1}^{\infty}$ converges to a minimal point $\calX_*$ of $\|\calX\|_{\gamma}$ on $\{\calX|\calO_{\Omega}=\calX_{\Omega}\}$.
\end{thm}

%-------------------------------------------------------------------------
\subsection{Non-convex Tensor Robust PCA}\label{sec:trpca}

\begin{algorithm}[t]  
\caption{MM algorithm for tensor RPCA}  
\label{alg:trpcamm}  
{\bf Input:} $\calX$\\
{\bf Hyper parameters:} $\gamma_1,\gamma_2,\mu_0,\rho,\mu_{\max}$\\
\begin{algorithmic}[1]
\STATE Initialize $\calL^0,\calE^0$ by other tensor RPCA algorithm
\STATE Initialize $\calY^0$ by random guess
\WHILE{ not converged}
\STATE Calculate t-SVD of $\calL^{\old}=\calU*\calS^{\old}*\calV^*$
\STATE Set $\calZ^t_{ijk}=\delta_{i}^j\varphi_{1,\gamma_1}'(\overline{\calS}^{\old}_{iik})/\mu$
\STATE Set $\calW^t_{ijk}=\varphi_{\lambda,\gamma_2}'(\calE^{\old}_{ijk})/\mu$
\WHILE{ not converged}
	\STATE $\calL^{t}_{l+1}=\calD_{\calZ^t}(\calX-(\calE^t_l+\frac{1}{\mu_l}\calY^t_l))$
	\STATE $\calE^{t}_{l+1}=\calT_{\calW^t}(\calX-(\calL^t_{l+1}+\frac{1}{\mu_l}\calY_l))$
	\STATE $\calY^{t}_{l+1} = \calY^t_l+\mu_l\calL^t_{l+1}+\calE^t_{l+1}-\calX)$
	\STATE $\mu_{l+1}=\min(\rho\mu_l,\mu_{\max})$
\ENDWHILE
\STATE Update $\calL^{t+1},\calE^{t+1}$ by the result of inner iteration
\STATE Set $\calL^{\old}=\calL^{t+1},\calE^{\old}=\calE^{t+1}$
\ENDWHILE
\end{algorithmic}  
\end{algorithm}  

Given a tensor $\calX$, the goal of robust PCA is to decompose $\calX$ into two parts: low-rank tensor $\calL$ and sparse tensor $\calE$. This problem can be formulated as 
\begin{equation}
\min_{\calL,\calE}\text{rank}(\calL)+\|\calE\|_{0}\quad\text{s.t. }\calL+\calE=\calX.
\end{equation}
Apply the proposed novel sparsity measure and tensor $\gamma-$norm, we obtain
\begin{equation}\label{eq:trpcamodel}
\min_{\calL,\calE}\|\calL\|_{\gamma_1}+\Phi_{\lambda,\gamma_2}(\calE)\quad\text{s.t. }\calL+\calE=\calX.
\end{equation}
We may set $\varphi_{\lambda,\gamma}$ to be SCAP or MCP in Equation (\ref{eq:trpcamodel}), and refer them as $\text{TRPCA}_{\mathrm{scad}}$ and $\text{TRPCA}_{\mathrm{mcp}}$ respectively.

We apply majorization minimization algorithm to solve problem (\ref{eq:trpcamodel}). 
 Given $\calL^{\old},\calE^{\old}$, we minimize the upper bound of $\|\calL\|_{\gamma_1}+\Phi_{\lambda,\gamma_2}(\calE)$ given in Theorem \ref{upperbound},
\begin{equation}
\min_{\calL,\calE}Q_{\gamma_1}(\calL|\calL^{\old})+Q_{\lambda,\gamma_2}(\calE|\calE^{\old})\quad\text{s.t. }\calL+\calE=\calX.
\end{equation}
This problem is also easy to solve by ADMM. The sub-problem of updating $\calL$ and $\calE$ has closed-form solutions according to Theorem \ref{thm:generalizedthresholding} and Theorem \ref{thm:generalizedtSVT}. We describe the detailed algorithm in Algorithm \ref{alg:trpcamm}. We have the following theoretical result for this algorithm.

\begin{thm}
	The iteration sequence generated by 
	\[(\calL^{t+1},\calE^{t+1})\in\arg\min_{\calL+\calE=\calX} Q_{\gamma_1}(\calL|\calL^t)+Q_{\lambda,\gamma_2}(\calE|\calE^t)\]
	is non-increasing, \ie, $\|\calL^{t+1}\|_{\gamma_1}+\Phi_{\lambda,\gamma_2}(\calE^{t+1}) \le \|\calL^{t}\|_{\gamma_1}+\Phi_{\lambda,\gamma_2}(\calE^{t})$ and converges to some $Q^*$. Besides, there exists a subsequence $\{(\calL^{i_k},\calE^{i_k})\}_{k=1}^{\infty}$ converges to a minimal point $(\calL_*,\calE_*)$ of $\|\calL\|_{\gamma_1}+\Phi_{\lambda,\gamma_2}(\calE)$ on $\{(\calL,\calE)|\calL+\calE=\calX\}$.
\end{thm}

%-------------------------------------------------------------------------
\section{Experiments}\label{sec:experiments}
\begin{figure*}[t]
	\centering
	\includegraphics[width=0.9\textwidth]{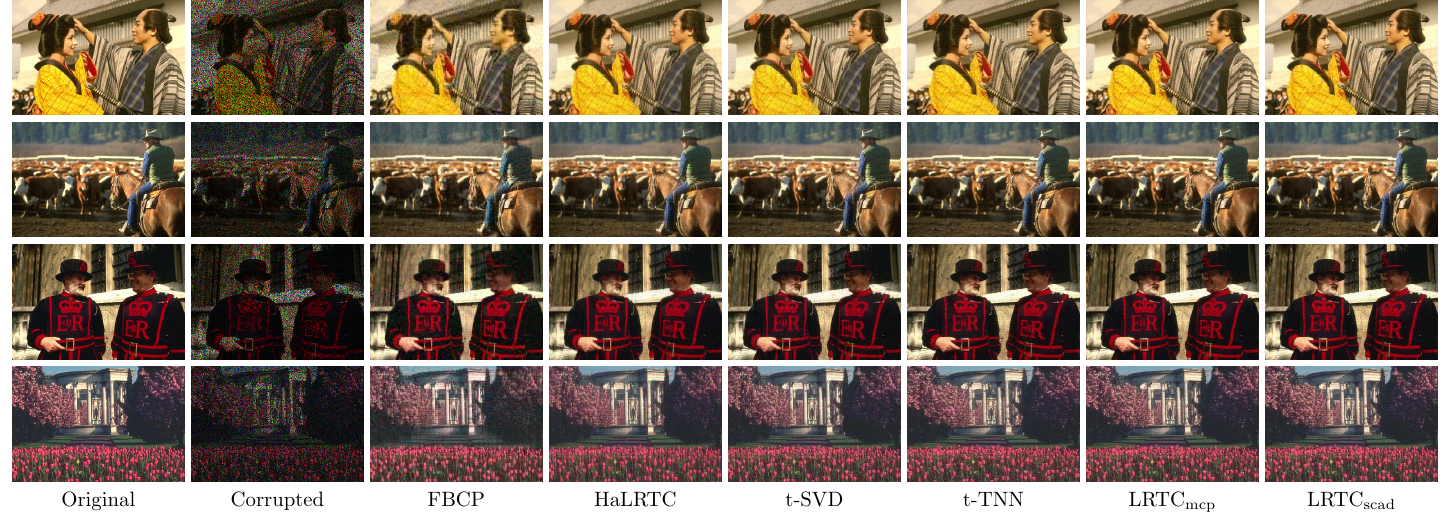}
	\caption{Tensor completion performance comparison on example images.}
	\label{fig:tc_visual}
\end{figure*}

\subsection{Datasets and Experimental Settings}
%\footnote{\url{https://www2.eecs.berkeley.edu/Research/Projects/CS/vision/grouping/resources.html\#bsds500}}
%\footnote{\url{https://personalpages.manchester.ac.uk/staff/d.h.foster/Hyperspectral_images_of_natural_scenes_02.html}}
We evaluate the effectiveness of the proposed non-convex tensor completion and tensor RPCA algorithms on Berkeley Segmentation 500 Dataset (BSD 500) \cite{bsd500} and Natural Scenes 2002 Dataset (NS 2002) \cite{naturalscenes2002}. Berkeley Segmentation 500 Dataset consists of 500 natural images, and Natural Scenes 2002 Dataset contains 8 hyperspectral images with 31 bands sampled from 410nm to 710nm at 10nm intervals. All the hyperspectral images are downsampled by factor 2. We employ Mean Square Error (MSE), Peak Signal-to-Noise Ratio (PSNR), Feature SIMilarity (FSIM) \cite{FSIM}, Erreur Relative Globale Adimensionnelle de Synth\`ese (ERGAS)\cite{ERGAS} and Spectral Angle Mapper (SAM) \cite{SIPS,SAM} as performance evaluation indexes. Smaller MSE, ERGAS, SAM and larger PSNR, FSIM indicates the result is better.

There are some practical issues to clarify about Algorithm \ref{alg:tcmm} and Algorithm \ref{alg:trpcamm}. First, the hyper-parameters $\mu_0,\rho,\mu_{\text{max}}$ are introduced to accelerate the convergence speed. The inner ADMM iteration is always convergent regardless of the settings of these parameters, but the speed of convergence is different. In practice, we find setting $\mu_0=1,\rho=1.1,\mu_{\text{max}}=1e10$ results in fast convergence. Second, the initialization of $\calX^0$ and $\calL,\calE$ is very important, since a good starting position usually leads to better final result in non-convex optimization problems. We suggest initializing $\calX^0$ or $\calL^0,\calE^0$ by other tensor completion or tensor RPCA methods (such as TRPCA \cite{TNNPAMI}). Last but not least, it usually takes a long time for the outer iteration to converge. In practice, it's not necessary to wait for convergence. Instead, we can iterate the outer loop for fixed times.

\begin{table*}[htb]
\centering
\caption{Tensor completion performances evaluation on natural images under varying sampling rates.}
\resizebox{0.925\linewidth}{!}{
\begin{tabular}{cccccccccccccc}
\toprule
\multirow{2}{*}{Method} & \multicolumn{3}{c}{20\%} & \multicolumn{3}{c}{40\%} & \multicolumn{3}{c}{60\%} & \multicolumn{3}{c}{80\%} & \multirow{2}{*}{time (s)} \\ \cmidrule(lr){2-4} \cmidrule(lr){5-7} \cmidrule(lr){8-10} \cmidrule(lr){11-13}
                        &PSNR & SSIM   & FSIM   & PSNR   & SSIM   & FSIM   & PSNR    & SSIM   & FSIM   & PSNR    & SSIM   & FSIM   &                       \\ \midrule

SiLRTC &$23.59$ & $0.798$ & $0.822$ &$27.987$ & $0.899$ & $0.915$ &$32.24$ & $0.951$ & $0.964$ &$37.47$ & $\bf0.977$ & $0.988$ & $19.95$ \\ 
 HaLRTC  &$23.82$ & $0.797$ & $0.828$ &$28.39$ & $0.902$ & $0.920$ &$33.038$ & $0.953$ & $0.968$ &$39.27$ & $\bf0.978$ & $0.991$ & $31.32$ \\ 
FBCP  &$24.08$ & $0.668$ & $0.794$ &$26.40$ & $0.753$ & $0.837$ &$27.35$ & $0.799$ & $0.857$ &$27.71$ & $0.82$ & $0.865$ & $103.68$ \\ 
t-SVD  &$24.13$ & $0.764$ & $0.835$ &$29.703$ & $0.893$ & $0.931$ &$36.03$ & $0.950$ & $0.977$ &$45.04$ & $0.969$ & $0.992$ & $33.47$ \\ 
t-TNN  &$25.30$ & $0.841$ & $0.864$ &$30.50$ & $0.923$ & $0.943$ &$36.27$ & $0.952$ & $0.978$ &$44.14$ & $0.967$ & $0.991$ & $\bf3.03$ \\  \midrule
$\text{LRTC}_{mcp}$ &$\bf 25.70$ & $ \bf 0.845$ & $ \bf 0.869$ &$ \bf 31.06$ & $\bf0.927$ & $\bf0.946$ &$\bf36.87$ & $\bf0.959$ & $\bf0.980$ &$\bf45.46$ & $0.973$ & $\bf0.993$ & $3.79$ \\ 
$\text{LRTC}_{scad}$ &$ \bf 25.70$ & $\bf 0.844$ & $\bf 0.869$ &$ \bf 31.04$ & $\bf0.926$ & $\bf0.946$ &$\bf36.84$ & $\bf0.959$ & $\bf0.980$ &$\bf45.45$ & $0.973$ & $\bf0.993$ & $3.83$ \\  \bottomrule
\end{tabular}}
\label{table:tc_bsd}
\end{table*} 
\begin{table*}[htb]
\centering
\caption{Tensor completion performances evaluation on hyperspectral images under varying sampling rates. The unit is $10^{-4}$ for MSE.}
\resizebox{0.925\linewidth}{!}{
\begin{tabular}{cccccccccccccc}
\toprule
\multirow{2}{*}{Method} & \multicolumn{3}{c}{20\%} & \multicolumn{3}{c}{40\%} & \multicolumn{3}{c}{60\%} & \multicolumn{3}{c}{80\%} & \multirow{2}{*}{time (s)} \\ \cmidrule(lr){2-4} \cmidrule(lr){5-7} \cmidrule(lr){8-10} \cmidrule(lr){11-13}
                        &PSNR& MSE & ERGAS   & PSNR   & MSE   & ERGAS   & PSNR    & MSE   & ERGAS   & PSNR    & MSE & ERGAS   &   \\ \midrule

SiLRTC& $41.71$ & $4.70$ & $30.912$ & $45.46$ & $1.95$ & $21.524$ & $49.14$ & $0.827$ & $14.412$ & $52.86$ & $0.354$ & $10.341$ & $42.21$ \\ 
 HaLRTC  &$42.11$ & $4.53$ & $29.626$ &$45.95$ & $1.90$ & $20.556$ &$49.79$ & $0.801$ & $13.514$ &$53.67$ & $0.342$ & $9.660$ & $53.11$ \\ 
FBCP &$37.09$ & $14.47$ & $52.931$ &$43.25$ & $3.85$ & $29.318$ &$46.00$ & $2.225$ & $22.344$ &$46.67$ & $2.011$ & $20.688$ & $210.18$ \\ 
t-SVD &$41.64$ & $5.10$ & $31.835$ &$45.52$ & $2.12$ & $22.142$ &$49.42$ & $0.886$ & $14.685$ &$53.49$ & $0.365$ & $10.157$ & $224.21$ \\ 
t-TNN &$42.46$ & $3.81$ & $28.702$ &$46.07$ & $1.60$ & $20.135$ &$49.82$ & $0.667$ & $\bf 13.272$ &$53.61$ & $\bf 0.290$ & $\bf 9.515$ & $\bf 47.29$ \\ 
\midrule

$\text{LRTC}_{\mathrm{mcp}}$ &$\bf 42.91$ & $\bf 3.58$ & $\bf 27.799$ &$\bf 46.75$ & $\bf 1.51$ & $\bf 19.684$ &$\bf 50.47$ & $\bf 0.665$ & $\bf 13.293$ &$\bf 54.11$ & $\bf 0.316$ & $\bf 9.642$ & $70.95$ \\ 
$\text{LRTC}_{\mathrm{scad}}$ &$\bf  42.91$ & $\bf 3.58$ & $\bf 27.804$ &$\bf 46.76$ & $\bf 1.51$ & $\bf 19.684$ &$\bf 50.46$ & $\bf 0.666$ & $13.298$ &$\bf 54.11$ & $\bf 0.316$ & $\bf 9.642$ & $71.14$ \\ 
\bottomrule
\end{tabular}}
\label{table:tc_hsi}
\end{table*} 
\subsection{Tensor Completion Experiments}
We conduct tensor completion experiments on BSD 500 and NS 2002 to test the performances of $\text{LRTC}_{\mathrm{mcp}}$ and $\text{LRTC}_{\mathrm{scad}}$. For comparison, we also consider five competing tensor completion methods: Bayesian CP Factorization (FBCP) \cite{BCPF}, Simple Low-Rank Tensor Completion (SiLRTC) \cite{SNN}, High Accuracy Low-Rank Tensor Completion (HaLRTC) \cite{SNN}, tensor-SVD based method (t-SVD) \cite{ZeminZhang}, twist Tensor Nuclear Norm based method (t-TNN) \cite{tTNN}. 

\begin{figure*}[ht]
	\centering
	\includegraphics[width=0.9\textwidth]{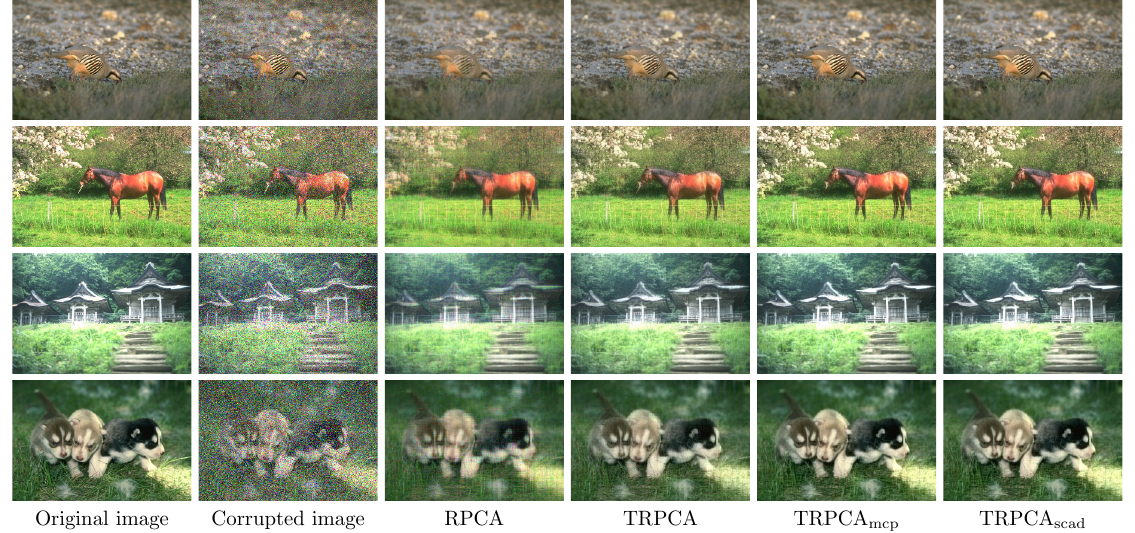}
	\caption{Tensor RPCA performance comparison on example images. From top to bottom: $p_n=0.1,0.2,0.3,0.4$.}
	\label{fig:trpca_visual}
\end{figure*}

\begin{figure*}[ht]
\centering
\includegraphics[width=0.9\textwidth]{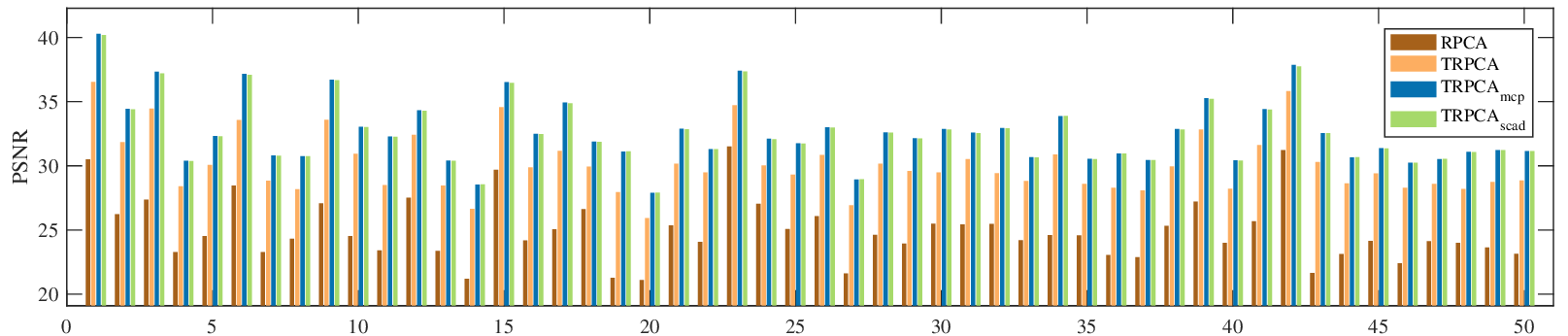}
\caption{Comparison of PSNR values obtained by RPCA, TRPCA, $\text{TRPCA}_{\mathrm{mcp}}$,$\text{TRPCA}_{\mathrm{scad}}$ on randomly selected 50 images.}
\label{fig:trpca_bsd}
\end{figure*}

{\bf Natural image inpainting.} We randomly select 200 images in BSD 500 for evaluation. For each image, pixels are randomly sampled with a sampling rate ranging from $20\%$ to $80\%$. For our $\text{LRTC}_{\mathrm{mcp}}$ and $\text{LRTC}_{\mathrm{scad}}$ models, we set $\gamma=25$, and use the result of t-TNN as initialization. To alleviate redundant computations, we apply the one-step LLA strategy \cite{onestep,wang2013nonconvex}, \ie, we run the outer loop only once instead of waiting for convergence. The average performances over selected 200 images under different sampling rates are summarized in Table \ref{table:tc_bsd}. From this table, we can see that our proposed $\text{LRTC}_{\mathrm{mcp}}$ and $\text{LRTC}_{\mathrm{scad}}$ outperform other competing methods in terms of all performance evaluation indices. As for efficiency, the proposed methods are significantly faster than FBCP, SiLRTC, HaLRTC, and t-SVD. Since the proposed methods are initialized by t-TNN, the running time is always slightly longer than t-TNN. However, the performances are improved by only one MM iteration and the extra running times are marginal. Therefore, we claim that it is necessary to introduce non-convexity in the tensor completion task. These observations demonstrate $\text{LRTC}_{\mathrm{mcp}}$ and $\text{LRTC}_{\mathrm{scad}}$ are both effective and efficient. We also give visual comparisons in Figure \ref{fig:tc_visual}.

{\bf Hyperspectral image inpainting.} We use all 8 hyperspectral images in this experiment. For each hyperspectral image, we randomly sample its elements with a sampling rate ranging from $0.2$ to $0.8$. Since the sizes of hyperspectral images are relatively large, we run the outer loop in Algorithm \ref{alg:tcmm} for 10 iterations based on t-TNN initialization. The performance comparison are shown in Table \ref{table:tc_hsi}. We have similar observations as in the natural image case: the results obtained by $\text{LRTC}_{\mathrm{mcp}}$ and $\text{LRTC}_{\mathrm{scad}}$ have lower MSE, ERGAS, and higher PSNR, indicating that the proposed methods outperform competing methods. The fastest and the slowest algorithms are different in Table \ref{table:tc_bsd} and \ref{table:tc_hsi} mainly because $n_3=3$ for natural images while $n_3=31$ for hyperspectral images and the time complexities of these algorithms depends on $n_3$ differently. Besides, we find that the results of $\text{LRTC}_{\mathrm{mcp}}$ and $\text{LRTC}_{\mathrm{scad}}$ are nearly the same since MCP and SCAD shares very similar properties. 

\subsection{Tensor RPCA Experiments}
We compare our proposed $\text{TRPCA}_{\mathrm{mcp}}$ and $\text{TRPCA}_{\mathrm{scad}}$ with matrix RPCA \cite{wright2009robust,candes2011robust} and TNN based TRPCA \cite{TNN,TNNPAMI} on both natural images and multispectral images. To apply matrix RPCA in tensor RPCA task, we simply apply matrix RPCA to each frontal slices of the corrupted tensor. 

{\bf Natural image restoration.} We first test $\text{TRPCA}_{\mathrm{mcp}}$ and $\text{TRPCA}_{\mathrm{scad}}$ on BSD 500. Each image is corrupted by salt-and-pepper noise with probability $p_n=0.1$. We set $\gamma_1=\gamma_2=20$ and run the outer loop of Algorithm \ref{alg:trpcamm} for 10 iterations. Performance comparison on randomly selected 50 images are shown in Figure \ref{fig:trpca_bsd}. From this figure, we have following observations. First, the results of TRPCA,$\text{TRPCA}_{\mathrm{mcp}}$ and $\text{TRPCA}_{\mathrm{scad}}$ are significantly better than results of RPCA. This indicates that considering tensor structure helps to improve recovery quality compared to consider each channel individually. Second, $\text{TRPCA}_{\mathrm{mcp}}$
 and $\text{TRPCA}_{\mathrm{scad}}$ obtained better performance than TRPCA, which means introducing concavity to Tensor RPCA tasks are necessary. Third, the PSNR values of $\text{TRPCA}_{\mathrm{mcp}}$ and $\text{TRPCA}_{\mathrm{scad}}$ are very similar, indicating the final result is not sensitive to the choice of non-convex penalty. We also give visual comparisons in Figure \ref{fig:trpca_visual}. Note that for the noise proportion ranging from $0.1$ to $0.4$ in Figure \ref{fig:trpca_visual}. 

\begin{table}[htb]

\caption{Tensor Robust Principal Component Analysis performances evaluation on hyper-spectral images.}
\centering
\begin{tabular}{@{\hspace{2em}}c@{\hspace{1em}}c@{\hspace{1em}}c@{\hspace{1em}}c@{\hspace{1em}}c@{\hspace{1em}}c@{\hspace{2em}}}\toprule
%\multicolumn{2}{l}{} &  RPCA & TRPCA & $\text{TRPCA}_{mcp}$ & $\text{TRPCA}_{scad}               $ \\ \hline
$p_n$ & Index &  RPCA  & TRPCA & $\text{TRPCA}_{\mathrm{mcp}}$ & $\text{TRPCA}_{\mathrm{scad}}               $ \\ \midrule
\multirow{4}{*}{0.1} & MSE & $20.113$ & $3.680$ & $\bf 3.332$ & $\bf 3.329$ \\ 
& PSNR & $41.89$ & $46.11$ & $\bf 46.57$ & $\bf 46.57$  \\ 
& ERGAS & $23.224$ & $13.203$ & $\bf 12.898$ & $\bf 12.893$ \\ 
& SAM & $0.0887$ & $0.0894$ & $\bf 0.0880$ & $\bf 0.0879$  \\  \midrule
\multirow{4}{*}{0.2} &  MSE & $21.298$ & $4.353$ & $\bf 3.837$ & $\bf 3.834$ \\ 
& PSNR & $41.31$ & $45.63$ & $\bf 46.08$ & $\bf 46.09$  \\ 
& ERGAS & $24.374$ & $14.473$ & $\bf 13.703$ & $\bf 13.706$ \\ 
& SAM & $0.1191$ & $0.1003$ & $\bf 0.0961$ & $\bf 0.0961$  \\   \midrule
\multirow{4}{*}{0.3}& MSE & $26.430$ & $6.637$ & $\bf 4.731$ & $\bf 4.726$ \\ 
& PSNR & $39.68$ & $44.13$ & $\bf 45.28$ & $\bf 45.27$  \\ 
& ERGAS & $28.956$ & $18.668$ & $\bf 15.177$ & $\bf 15.205$ \\ 
& SAM & $0.1710$ & $0.1291$ & $\bf 0.1076$ & $\bf 0.1079$  \\  \midrule
\multirow{4}{*}{0.4}& MSE & $49.858$ & $28.275$ & $\bf 6.979$ & $\bf 6.964$ \\ 
& PSNR & $36.13$ & $38.51$ & $\bf 43.16$ & $\bf 43.11$  \\ 
& ERGAS & $49.877$ & $50.354$ & $\bf 20.126$ & $\bf 20.298$ \\ 
& SAM & $0.2602$ & $0.2483$ & $\bf 0.1428$ & $\bf 0.1440$  \\  \bottomrule
\end{tabular}
\label{table:trpca_hsi}
\end{table}

{\bf Hyperspectral image restoration.} In this part, we test the proposed models on NS 2002. We add random noise to each hyperspectral image with probability $p_n$ ranging from $0.1$ to $0.4$. Here, the noise is uniformly distributed in $[0,0.1*M)$ where $M$ is the maximum absolute value of the original image. We set $\gamma_1=\gamma_2=50$, and run the outer loop for 10 iterations. The results of TRPCA are used as initialization for the proposed methods. We employ MSE, PSNR, ERGAS, and SAM as quality indexes. The results are reported in Table \ref{table:trpca_hsi}. It's easy to see that $\text{TRPCA}_{\mathrm{mcp}}$ and $\text{TRPCA}_{\mathrm{scad}}$ outperform competing methods in terms of all quality indexes. Specifically, we note that when $p_n=0.4$, \ie, the noise proportion is rather large, the proposed methods improve the results of TRPCA significantly. In these circumstances, the sparse assumption on noise may not hold. Although RPCA and TRPCA have nice exact recovery property under certain conditions, these conditions are rather strict and sometimes not satisfied. However, the proposed methods still recover the images successfully. 

%-------------------------------------------------------------------------
\section{Conclusions}\label{sec:conclusion}
In this paper, we have presented a new non-convex tensor rank surrogate function and a new non-convex sparsity measure based on the MCP and SCAD penalty functions. Then, we have analyzed some theoretical properties of the proposed penalties. In particular, we applied the non-convex penalties in tensor completion and tensor robust principal component analysis tasks, and devised optimization algorithms based on majorization minimization. Experimental results on natural images and hyperspectral images substantiate the proposed methods outperform competing methods.

\section*{Acknowledgement}
This work was supported by the National Key Research and Development Program of China under grant 2018AAA0100205.

% conference papers do not normally have an appendix

% use section* for acknowledgment
%\section*{Acknowledgment}

%The authors would like to thank...

% trigger a \newpage just before the given reference
% number - used to balance the columns on the last page
% adjust value as needed - may need to be readjusted if
% the document is modified later
%\IEEEtriggeratref{8}
% The "triggered" command can be changed if desired:
%\IEEEtriggercmd{\enlargethispage{-5in}}

% references section

% can use a bibliography generated by BibTeX as a .bbl file
% BibTeX documentation can be easily obtained at:
% http://mirror.ctan.org/biblio/bibtex/contrib/doc/
% The IEEEtran BibTeX style support page is at:
% http://www.michaelshell.org/tex/ieeetran/bibtex/
%\bibliographystyle{IEEEtran}
% argument is your BibTeX string definitions and bibliography database(s)
%\bibliography{IEEEabrv,../bib/paper}
%
% <OR> manually copy in the resultant .bbl file
% set second argument of \begin to the number of references
% (used to reserve space for the reference number labels box)

% that's all folks
\bibliographystyle{./IEEEtran}
\bibliography{ref.bib}

\onecolumn
\section*{Appendix}
In this supplemental material, we give detailed proofs of propositions and theorems in Section 4 and derive the ADMM steps in Section 5 of our paper.

\section*{SCAD and MCP}
Although the properties of SCAD and MCP are extensively investigated in literature, we review some vital properties related to our paper here and provide proof for completeness.
\begin{defn}[\bf{SCAD}] For some $\gamma>1$ and $\lambda>0$, the SCAD function is given by
\begin{equation}
\varphi^{\mathrm{SCAD}}_{\lambda,\gamma}(t)=\begin{cases}
\lambda|t|\quad &\mathrm{if } |t|\le \lambda,\\
\frac{\gamma\lambda |t|-0.5(t^2+\lambda^2)}{\gamma-1}\quad &\mathrm{if } \lambda<|t|\le\gamma\lambda, \\
\frac{\gamma+1}{2}\lambda^2 \quad &\mathrm{if }|t|>\gamma\lambda.
\end{cases}
\end{equation}
\end{defn}

\begin{defn}[\bf{MCP}]For some $\gamma>1$ and $\lambda>0$, the MCP function is given by
\begin{equation}
\varphi^{\mathrm{MCP}}_{\lambda,\gamma}(t)=\begin{cases}
\lambda|t|-\frac{t^2}{2\gamma}\quad&\mathrm{if }|t|<\gamma\lambda,\\
 \frac{\gamma\lambda^2}{2}\quad&\mathrm{if } |t|\ge \gamma\lambda.
\end{cases}
\end{equation}
\end{defn}

We use $\NC$ to denote $\SCAD$ or $\MCP$ alternatively, then we have the following properties:
\begin{prop} 
\begin{enumerate}[(i)]
\item $\NC\ge 0$ and $\NC= 0$ if and only if $t=0$.
\item For fixed $t$ and $\lambda$, $\NC$ is increasing in $\gamma$.
\item As $\gamma\rightarrow \infty$, $\NC\rightarrow \lambda|t|$.
\item When restricted on $t\ge 0$, $\NC$ is concave.
\end{enumerate}
\end{prop}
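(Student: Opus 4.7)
The plan is to prove each of the four assertions by direct piecewise computation from the closed-form definitions of $\SCAD$ and $\MCP$, treating the two families in parallel.

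For (i), I would inspect each piece. The linear and constant pieces are manifestly non-negative. For the middle piece of SCAD, the numerator $\gamma\lambda|t|-\tfrac12(t^2+\lambda^2)$ is a downward parabola in $|t|$ whose endpoint values at $|t|=\lambda$ and $|t|=\gamma\lambda$ are $(\gamma-1)\lambda^2$ and $\tfrac12(\gamma^2-1)\lambda^2$, both strictly positive, so it is positive on the whole interval; dividing by $\gamma-1>0$ preserves positivity. For the first piece of MCP, factoring as $|t|\bigl(\lambda-|t|/(2\gamma)\bigr)$ and using $|t|<\gamma\lambda$ yields $|t|/(2\gamma)<\lambda/2$, hence positivity. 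Equality forces $t=0$ because only the smallest-$|t|$ piece of each definition vanishes, and only at $t=0$. For (iii), I would fix $t$ and $\lambda$ and let $\gamma\to\infty$. Eventually $\gamma\lambda>|t|$, so MCP lands in the piece $\lambda|t|-t^2/(2\gamma)\to\lambda|t|$, and SCAD lands in either its first piece (already equal to $\lambda|t|$) or its middle piece, where dividing numerator and denominator by $\gamma$ gives $\bigl(\lambda|t|-(t^2+\lambda^2)/(2\gamma)\bigr)\big/(1-1/\gamma)\to\lambda|t|$.

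For (ii), I would differentiate with respect to $\gamma$ inside each piece, holding $t,\lambda$ fixed. For MCP this yields $t^2/(2\gamma^2)\ge 0$ on $|t|<\gamma\lambda$ and $\lambda^2/2\ge 0$ on $|t|\ge\gamma\lambda$. For SCAD the first piece is $\gamma$-independent; a short quotient-rule computation in the middle piece gives $\tfrac12(|t|-\lambda)^2/(\gamma-1)^2\ge 0$; the third piece gives $\lambda^2/2$. As $\gamma$ crosses the transition value $|t|/\lambda$, the two applicable formulas agree by continuity at $|t|=\gamma\lambda$, so piecewise monotonicity extends to global monotonicity in $\gamma$.

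The main obstacle is (iv), where I must upgrade piecewise concavity to global concavity on $[0,\infty)$. My plan is to verify that the one-variable derivative $\varphi'_{\lambda,\gamma}(t)$ is continuous and non-increasing on $[0,\infty)$, which is the standard characterization of concavity for a differentiable function. For MCP, $\varphi'_{\lambda,\gamma}(t)=\lambda-t/\gamma$ on $[0,\gamma\lambda)$ (linearly decreasing from $\lambda$ to $0$) and $0$ on $[\gamma\lambda,\infty)$, so continuity and monotonicity are immediate. For SCAD, $\varphi'_{\lambda,\gamma}(t)$ equals $\lambda$ on $[0,\lambda]$, then $(\gamma\lambda-t)/(\gamma-1)$ on $[\lambda,\gamma\lambda]$ (decreasing from $\lambda$ down to $0$), then $0$ on $[\gamma\lambda,\infty)$; a two-line check confirms the one-sided limits match at $t=\lambda$ and $t=\gamma\lambda$. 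Hence $\varphi'_{\lambda,\gamma}$ is continuous and non-increasing on $[0,\infty)$, so $\NC$ is concave there.
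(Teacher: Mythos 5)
Your proof is correct, and its overall strategy---direct piecewise verification on $t\ge 0$ using the closed forms---matches the paper's appendix proof; parts (i) and (iii) are essentially identical in both. You diverge in two places, both to your benefit. For (ii), the paper fixes $\gamma_2>\gamma_1>1$ and compares $\varphi_{\lambda,\gamma_2}(t)$ with $\varphi_{\lambda,\gamma_1}(t)$ by explicit casework on which piece $t$ occupies under each parameter, including minimizing the quadratic $-t^2+2\gamma_2\lambda t-\lambda^2-(\gamma_1+1)(\gamma_2-1)\lambda^2$ over the relevant interval to settle the mixed case; you instead treat $\varphi_{\lambda,\gamma}(t)$ as a function of $\gamma$, check $\partial_\gamma\varphi\ge 0$ on each piece (your quotient-rule value $\tfrac{1}{2}(|t|-\lambda)^2/(\gamma-1)^2$ agrees with the paper's), and glue across the single transition $\gamma=|t|/\lambda$ via continuity of the two formulas at $|t|=\gamma\lambda$. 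This is valid---a continuous, piecewise-$C^1$ function with nonnegative one-sided derivatives is nondecreasing---and it eliminates the paper's most laborious computation. For (iv), the paper computes the second derivative piecewise and concludes concavity from its nonpositivity, which tacitly relies on $\varphi_{\lambda,\gamma}$ being $C^1$ at the knots $t=\lambda$ and $t=\gamma\lambda$ (a piecewise-nonpositive second derivative alone does not rule out an upward jump in the first derivative); your argument verifies exactly that point, checking that $\varphi'_{\lambda,\gamma}$ is continuous and non-increasing on $[0,\infty)$, so it is the same concavity criterion with the boundary matching made explicit. In short: same route, with a cleaner monotonicity-in-$\gamma$ argument and a slightly more rigorous treatment of the piece boundaries in the concavity step.
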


\begin{proof}Note that $\varphi_{\lambda,\gamma}(t)$ is an even function, thus we only consider the case $t\ge 0$. 
\begin{enumerate}[(i)]
\item 
For SCAD, if $0\le t\le \lambda$ or $t>\gamma\lambda$, $\SCAD\ge 0$ since $\lambda>0$ and $\gamma>0$. If $\lambda<t\le \gamma\lambda$, the minimum of $\SCAD$ is attained at $\lambda$, which equals to $\lambda^2$. Therefore, $\SCAD\ge 0$. 

For MCP, if $t>\lambda\gamma$  $\MCP\ge 0$ since $\lambda>0,\gamma>0$. If $t\le \gamma\lambda$, the minimum of $\MCP$ is attained at $0$, which equals to $0$. Therefore, $\MCP\ge 0$.

Obviously, $\NC=0$ if and only if $t=0$.

\item Suppose we have $\gamma_2>\gamma_1>1$.

For SCAD, increasing $\gamma$ has no influence for $0\le t\le\lambda$. If $\lambda<t<\gamma_1\lambda$, then $\lambda<t<\gamma_2\lambda$. Note that
\[\frac{\mathrm{d}}{\mathrm{d}\gamma}\frac{\gamma\lambda t-0.5(t^2+\lambda^2)}{\gamma-1}=\frac{(t-\lambda)^2}{2(\gamma-1)^2}\ge 0.\]
Therefore,  $\SCADtwo\ge \SCADone$.  If $t>\gamma_1\lambda$, then we have two cases: $t>\gamma_2\lambda$ or $\lambda<t\le \gamma_2\lambda$.  If $t>\gamma_2\lambda$, then $\SCADtwo=\frac{\gamma_2+1}{2}\lambda^2>\frac{\gamma_1+1}{2}\lambda^2=\SCADone$. If $\lambda<t\le \gamma_2\lambda$, then
\[\SCADtwo-\SCADone = \frac{\gamma_2\lambda t -0.5(t^2+\lambda^2)}{\gamma_2-1}-\frac{\gamma_1+1}{2}\lambda^2=\frac{-t^2+2\gamma_2\lambda t-\lambda^2-(\gamma_1+1)(\gamma_2-1)\lambda^2}{2(\gamma_2-1)}.\]
Note that in this case $\gamma_1\lambda\le t\le \gamma_2\lambda$, the above equation attains its minimum at $\gamma_1\lambda$, thus 
\[\SCADtwo-\SCADone \ge\frac{(\gamma_2-\gamma_1)(\gamma_1-1)\lambda^2}{2(\gamma_2-1)}>0.\]
The last inequality follows from the condition $\gamma_2>\gamma_1>1$. Therefore, we conclude that $\SCAD$ is increasing with respect to $\gamma$.

For MCP, if $0\le t<\gamma_1\lambda$, then $0\le t<\gamma_2\lambda$, thus $\MCPtwo=\lambda t-\frac{t^2}{2\gamma_2}>\lambda t-\frac{t^2}{2\gamma_1}=\MCPone$. If $t\ge \gamma_1\lambda$, we have two cases: $t\ge \gamma_2\lambda$ or $0<t<\gamma_2\lambda$. If $t\ge \gamma_2\lambda$, then $\MCPtwo=\frac{\gamma_2\lambda^2}{2}>\frac{\gamma_1\lambda^2}{2}=\MCPone$. If $0\le t\le \gamma_2\lambda$, then
\[\MCPtwo-\MCPone = \lambda t-\frac{t^2}{2\gamma_2}-\frac{\gamma_1\lambda^2}{2}>\frac{(\gamma_2-\gamma_1)\gamma_1\lambda^2}{2\gamma_2}>0.\]
Therefore, we conclude that $\MCP$ is increasing with respect to $\gamma$.

\item Note that as $\gamma\rightarrow \infty$,
\[\frac{\gamma\lambda |t|-0.5(t^2+\lambda^2)}{\gamma-1}\rightarrow \lambda|t|\quad,\quad\lambda|t|-\frac{t^2}{2\gamma}\rightarrow\lambda |t|.\]
The result follows easily.

\item Consider the second order derivative of $\NC$.

For SCAD,
\[\frac{\mathrm{d}^2}{\mathrm{d}^2t}\SCAD=
\begin{cases}
0,\quad \text{if } 0\le t\le \lambda \text{ or } t>\gamma\lambda, \\
-\frac{1}{\gamma-1},\quad\text{if } t\le \gamma\lambda.
\end{cases}
\] 

For MCP,
\[\frac{\mathrm{d}^2}{\mathrm{d}^2t}\MCP=
\begin{cases}
-\frac{1}{\gamma},\quad \text{if }0\le t\le \gamma\lambda,\\
0,\quad\text{if } t\ge \gamma\lambda.
\end{cases}
\]

Therefore, $\NC$ is concave over $[0,\infty)$ since the second order derivative is non-positive. Besides, by the definition of concave function we know that $\varphi_{\lambda,\gamma}(t)\le \varphi_{\lambda,\gamma}(s)+\varphi_{\lambda,\gamma}'(s)(t-s)$ for $s,t\ge 0$.
\end{enumerate}
\end{proof}

\section*{A Novel Tensor Sparsity Measure}
Recall that the novel tensor sparsity measure is defined as
\begin{equation}
\Phi_{\lambda,\gamma}(\calA) = \sum_{i=1}^{n_1}\sum_{j=1}^{n_2}\sum_{k=1}^{n_3}\varphi_{\lambda,\gamma}(\calA_{ijk}).
\end{equation}
Here, we may set $\varphi_{\lambda,\gamma}$ to be $\varphi^{\mathrm{SCAD}}_{\lambda,\gamma}$ or $\varphi^{\mathrm{MCP}}_{\lambda,\gamma}$. We have the following proposition:
\begin{prop} For $\calA\in\bbR^{n_1\times n_2\times n_3}$, $\Phi_{\lambda,\gamma}(\calA)$ satisfies:
\begin{enumerate}[(i)]
\itemsep0em 
\item{} $\Phi_{\lambda,\gamma}(\calA)\ge 0$ with the equality holds iff $\calA=0$;
\item{} $\Phi_{\lambda,\gamma}(\calA)$ is concave with respect to $|\calA|$;
\item{} $\Phi_{\lambda,\gamma}(\calA)$ is increasing in $\gamma$, $\Phi_{\lambda,\gamma}(\calA)\le\lambda \|\calA\|_1$ and $\lim_{\gamma\rightarrow \infty}\Phi_{\lambda,\gamma}(\calA)=\lambda\|\calA\|_1$.
\end{enumerate}
\end{prop}

\begin{proof} Note that $\Phi_{\lambda,\gamma}(\calA)$ is separable with respect to each entry of $\calA$. Thus, applying the related properties of $\NC$ to each entry $\calA_{ijk}$, we immediately get the result.
\end{proof}

\section*{A Novel Tensor Rank Penalty}
Suppose $\calA$ has t-SVD $\calA = \calU*\calS*\calV^*$, we define the $\gamma-$norm of $\calA$ as 
\begin{equation}
\|\calA\|_{\gamma} =\frac{1}{n_3}\sum_{i,k} \varphi_{1,\gamma}(\overline{\calS}(i,i,k)).
\end{equation}
The tensor $\gamma-$norm enjoys the following properties.
\begin{prop} For $\calA\in\bbR^{n_1\times n_2\times n_3}$, suppose $\calA$ has t-SVD $\calA = \calU*\calS*\calV^*$, then $\|\calA\|_{\gamma}$ satisfies:
\begin{enumerate}[(i)]
\itemsep0em
\item{} $\|\calA\|_{\gamma}\ge 0$ with equality holds iff $\calA=0$;
\item{} $\|\calA\|_{\gamma}$ is increasing in $\gamma$, $\|\calA\|_{\gamma}\le \|\calA\|_*$ and $\lim_{\gamma\rightarrow\infty}\|\calA\|_{\gamma}=\|\calA\|_*$;
\item{} $\|\calA\|_{\gamma}$ is concave with respect to $\{\overline{\calS}(i,i,k)\}_{i,k}$;
\item{} $\|\calA\|_{\gamma}$ is orthogonal invariant, \ie, for any orthogonal tensors $\mathcal{P}\in\bbR^{n_1\times n_1\times n_3},\calQ\in\bbR^{n_2\times n_2\times n_3}$, $\|\mathcal{P}*\calA*\calQ\|_{\gamma}=\|\calA\|_{\gamma}$.
\end{enumerate}
\end{prop}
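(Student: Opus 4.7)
The plan is to reduce each of the four claims to the pointwise properties of $\NCg$ already established for the proposed tensor sparsity measure, together with two simple facts about the t-SVD in Fourier domain: first, that the diagonal entries $\overline{\calS}(i,i,k)$ are the singular values of the $k$-th frontal slice $\overline{A}^{(k)}$ and are therefore non-negative reals; second, that by the inverse DFT relation $\calS(i,i,1)=\frac{1}{n_3}\sum_k \overline{\calS}(i,i,k)$, so the tensor nuclear norm satisfies $\|\calA\|_*=\frac{1}{n_3}\sum_{i,k}\overline{\calS}(i,i,k)$. This second identity is already noted in the discussion following Definition~\ref{tnn} and is what makes (ii) transparent.

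For (i), I would observe that $\NCg\ge 0$ for $t\ge 0$ with equality iff $t=0$, so $\|\calA\|_\gamma\ge 0$, with equality iff $\overline{\calS}(i,i,k)=0$ for all $i,k$, iff every $\overline{A}^{(k)}$ is the zero matrix, iff $\overline{A}=0$, iff $\calA=0$. For (ii), I would apply the monotonicity in $\gamma$ and the pointwise bound $\NCg\le |t|$ from the preceding proposition to each $\overline{\calS}(i,i,k)\ge 0$, sum, and use the nuclear-norm identity above to conclude $\|\calA\|_\gamma\le \|\calA\|_*$, monotonicity in $\gamma$, and the limit $\|\calA\|_\gamma\to \|\calA\|_*$. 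For (iii), concavity in $\{\overline{\calS}(i,i,k)\}_{i,k}$ follows because $\NCg$ is concave on $[0,\infty)$ and sums of concave functions are concave; since the singular values are non-negative, the concavity applies on the relevant domain.

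The main obstacle, and the only part that needs a genuine argument, is the orthogonal invariance in (iv). Here I would work entirely in the Fourier domain. Since $\calP*\calP^*=\calP^**\calP=\calI$, passing to the transformation domain gives $\overline{P}^{(k)}(\overline{P}^{(k)})^*=(\overline{P}^{(k)})^*\overline{P}^{(k)}=I_{n_1}$ for each $k$, so every frontal slice $\overline{P}^{(k)}$ is a unitary matrix; similarly each $\overline{Q}^{(k)}$ is unitary. By the characterization of t-product in Fourier domain, $\overline{(\calP*\calA*\calQ)}^{(k)}=\overline{P}^{(k)}\overline{A}^{(k)}\overline{Q}^{(k)}$. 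Since singular values of a matrix are invariant under left and right multiplication by unitary matrices, $\overline{P}^{(k)}\overline{A}^{(k)}\overline{Q}^{(k)}$ and $\overline{A}^{(k)}$ have the same multiset of singular values for every $k$. Therefore the collection $\{\overline{\calS}(i,i,k)\}_{i,k}$ used to define $\|\cdot\|_\gamma$ is the same for $\calP*\calA*\calQ$ and $\calA$, whence $\|\calP*\calA*\calQ\|_\gamma=\|\calA\|_\gamma$.

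The only subtle point to be careful about is ordering: the singular values of $\overline{A}^{(k)}$ and those of $\overline{P}^{(k)}\overline{A}^{(k)}\overline{Q}^{(k)}$ agree as multisets but the indexing $i$ in $\overline{\calS}(i,i,k)$ is conventionally decreasing, so the equality of sums is what matters rather than a slice-by-slice identity of the $\calS$ tensors themselves. Because the sum $\sum_i \NCg(\overline{\calS}(i,i,k))$ depends only on the multiset of singular values in the $k$-th Fourier slice, this is not an issue and (iv) follows.
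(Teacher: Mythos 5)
Your proof is correct, and for parts (i)--(iii) it is essentially the paper's own argument: reduce each claim to the pointwise properties of $\varphi_{1,\gamma}$ (non-negativity with equality only at $0$, monotonicity in $\gamma$, the limit $\varphi_{1,\gamma}(t)\rightarrow|t|$, concavity on $[0,\infty)$) applied to each $\overline{\calS}(i,i,k)$ and summed, with the identity $\|\calA\|_*=\frac{1}{n_3}\sum_{i,k}\overline{\calS}(i,i,k)$ handling the nuclear-norm comparison. Where you genuinely diverge is (iv). The paper stays in the original domain and in the t-product algebra: it checks that $\calP*\calU$ and $\calQ*\calV$ are orthogonal, so that $(\calP*\calU)*\calS*(\calQ*\calV)^*$ is a t-SVD of the transformed tensor with the \emph{same} f-diagonal $\calS$, whence invariance is immediate. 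You instead pass to the Fourier domain, observe that tensor orthogonality of $\calP,\calQ$ makes every frontal slice $\overline{P}^{(k)},\overline{Q}^{(k)}$ unitary, and invoke the classical unitary invariance of matrix singular values slice by slice. Both routes are valid; the paper's is shorter once one grants that orthogonal tensors compose under the t-product, while yours is more elementary and more transparent about what is actually preserved (the multiset of singular values in each Fourier slice), at the cost of implicitly using the compatibility $\overline{\calP^*}(:,:,k)=\bigl(\overline{P}^{(k)}\bigr)^*$ --- precisely the property the slice-flip in the definition of conjugate transpose is designed to guarantee, and standard in the cited t-product literature, so citing it is acceptable. Your closing remark on ordering is careful but ultimately moot: the sum defining $\|\cdot\|_{\gamma}$ is symmetric in its arguments, and with the conventional decreasing ordering the multiset equality would in fact force slice-wise equality of $\overline{\calS}$ anyway.
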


\begin{proof}
\begin{enumerate}[(i)]
\item Since $\NCg\ge0$ and $\|\calA\|_{\gamma}$ is the sum of $\NCg$, we immediately know $\|\calA\|_{\gamma}\ge 0$. If  $\calA=0$, then obviously $\calS=0$ and $\overline{\calS}=0$, which implies $\|\calA\|_{\gamma}=0$.  On the other hand, $\|\calA\|_{\gamma}=0$ implies $\overline{\calS}(i,i,k)=0$. However, $\overline{\calS}$ is f-diagonal, thus $\overline{\calS}=0$, and $\calA=\calU*\ifft(\overline{\calS},3)*\calV^*=0$.

\item Since $\NCg$ is increasing with respect to $\gamma$ and $\|\calA\|_{\gamma}$ is the sum of $\NCg$, using the properties of $\NC$ we know $\|\calA\|_{\gamma}$ is increasing with respect to $\gamma$ and 
\[\lim_{\gamma\rightarrow\infty}\|\calA\|_{\gamma}=\lim_{\gamma\rightarrow\infty}\frac{1}{n_3}\sum_{i,k}\varphi_{1,\gamma}(\overline{\calS}(i,i,k))=\frac{1}{n_3}\sum_{i,k}|\overline{\calS}(i,i,k)|=\|\calA\|_*.\]
Combining the above facts, we also get $\|\calA\|_{\gamma}\le\|\calA\|_*.$

\item This follows from the fact that $\NC$ is concave over $[0,\infty).$

\item Since $\calA$ has t-SVD $\calA = \calU*\calS*\calV^*$, we claim that $\calP*\calU*\calS*\calV^**\calQ^*$ is the t-SVD of $\calP*\calA*\calQ^*$. $\calS$ is already f-diagonal, so we only need to verify $\calP*\calU$ and $(\calV^**\calQ^*)^*=\calQ*\calV$ are orthogonal.
\[(\calP*\calU)*(\calP*\calU)^*=\calP*\calU*\calU^**\calP=\calI.\]
Other equalities are similar to verify. Therefore, $\|\calP*\calA*\calQ^*\|_{\gamma}=\frac{1}{n_3}\sum_{i,k} \varphi_{1,\gamma}(\overline{\calS}(i,i,k))=\|\calA\|_{\gamma}$.
\end{enumerate}
\end{proof}

\section*{Generalized Thresholding Operators}
\begin{thm}\label{upperboundA}
We can view $\Phi_{\lambda,\mu}(\calX)$ as a function of $|\calX|$, and $\|\calX\|_{\gamma}$ as a function of $\{\overline{\calS}(i,i,k)\}_{i,k}$. For any $\calX^{\textnormal{old}}$, let
\[
Q_{\lambda,\gamma}(\calX|\calX^{\textnormal{old}})=\Phi_{\lambda,\gamma}(\calX^{\textnormal{old}})+\sum_{i,j,k}\varphi'_{\lambda,\gamma}(|\calX^{\textnormal{old}}_{ijk}|)(|\calX_{ijk}|-|\calX^{old}_{ijk}|),\]
\[
Q_{\gamma}(\calX|\calX^{\textnormal{old}})= \|\calX^{\textnormal{old}}\|_{\gamma}+\frac{1}{n_3}\sum_{i,k}\varphi_{1,\gamma}'(\overline{\calS}^{\old}_{iik})(\overline{\calS}_{iik}-\overline{\calS}^{\textnormal{old}}_{iik}),
\]
then
\begin{equation}
\begin{aligned}
Q_{\lambda,\gamma}(\calX^{\mathrm{old}}|\calX^{\mathrm{old}})=\Phi_{\lambda,\gamma}(\calX^{\mathrm{old}})\qquad,\qquad\Phi_{\lambda,\gamma}(\calX)\le Q_{\lambda,\gamma}(\calX|\calX^{\textnormal{old}}),\\
Q_{\gamma}(\calX^{\mathrm{old}}|\calX^{\mathrm{old}})=\|\calX^{\mathrm{old}}\|_{\gamma}\qquad,\qquad\|\calX\|_{\gamma}\le Q_{\gamma}(\calX|\calX^{\textnormal{old}}).
\end{aligned}
\end{equation}
\end{thm}

\begin{proof}
Recall that for any $s,t\ge 0$ we have $\NC\le \varphi_{\lambda,\gamma}(s)+\varphi_{\lambda,\gamma}(s)(t-s)$.
\[
\Phi_{\lambda,\gamma}(\calX)=\sum_{i,j,k}\varphi_{\lambda,\gamma}(\calX_{ijk})=\sum_{i,j,k}\varphi_{\lambda,\gamma}(|\calX_{ijk}|)\le\sum_{i,j,k}\left(\varphi_{\lambda,\gamma}(|\calX^{\old}_{ijk}|)+\varphi_{\lambda,\gamma}'(|\calX^{\old}_{ijk}|)(|\calX_{ijk}|-|\calX^{\old}_{ijk}|)\right)=Q_{\lambda,\gamma}(\calX|\calX^{\old}).
\]
\[
\|\calX\|_{\gamma}=\frac{1}{n_3}\sum_{i,k}\varphi_{1,\gamma}(\overline{\calS}_{ijk})\le
\frac{1}{n_3}\sum_{i,k}\left(\varphi_{1,\gamma}(\overline{\calS}^{\old}_{ijk})+\varphi_{1,\gamma}'(\overline{\calS}^{\old}_{ijk})(\overline{\calS}_{ijk}-\overline{\calS}_{ijk}^{\old})\right)=Q_{\gamma}(\calX|\calX^{\old}).
\]
\end{proof}

In the following soft thresholding operator is defined as $\calT_{\lambda}(z)=\textnormal{sgn}(z)[|z|-\lambda]_+$, which is the proximal operator of $\ell_1-$norm.

\begin{defn}[\bf{Generalized soft thresholding}]
Suppose $\calX,\calW\in\bbR^{n_1\times n_2\times n_3}$, the generalized soft thresholding operator is defined as
\begin{equation}
[\calT_{\calW}(\calX)]_{ijk} = \calT_{\calW_{ijk}}(\calX_{ijk}).
\end{equation}
\end{defn}

\begin{thm}\label{thm:generalizedthresholdingA}
For $\forall\mu>0$, let $\calW_{ijk}=\varphi_{\lambda,\gamma}'(|\calX^{\old}_{ijk}|)/\mu$, then 
\begin{equation}
\calT_{\calW}(\calY)=\arg\min_{\calX}Q_{\lambda,\gamma}(\calX|\calX^{\old})+\frac{\mu}{2}\|\calX-\calY\|_F^2.
\end{equation}
\end{thm}

\begin{proof}
In fact, 
\[
\begin{aligned}
\arg\min_{\calX} Q_{\lambda,\gamma}(\calX|\calX^{\old})+\frac{\mu}{2}\|\calX-\calY\|_F^2 =&\arg\min_{\calX} \sum_{ijk}\varphi_{\lambda,\gamma}'(|\calX^{\old}_{ijk}|)|\calX_{ijk}|+\frac{\mu}{2}\|\calX-\calY\|_F^2\\
=&\arg\min_{\calX} \sum_{i,j,k}\left(\varphi_{\lambda,\gamma}'(|\calX^{\old}_{ijk}|)|\calX_{ijk}|+\frac{\mu}{2}(\calX_{ijk}-\calY_{ijk})^2\right),
\end{aligned}
\]
which is separable to each entries of $\calX$. Consider $\calX_{ijk}$, according to the property of soft thresholding operator, the minimum of $\varphi_{\lambda,\gamma}'(|\calX^{\old}|)|\calX_{ijk}|+\frac{\mu}{2}(\calX_{ijk}-\calY_{ijk})^2$ is attained at $\calT_{\varphi_{\lambda,\gamma}'(|\calX^{\old}_{ijk}|)/\mu}(\calY_{ijk})$. Therefore, let $\calW_{ijk}=\varphi_{\lambda,\gamma}'(|\calX^{\old}_{ijk}|)/\mu$, by the definition of generalized soft thresholding, we immediately get
\[\calT_{\calW}(\calY)=\arg\min_{\calX}Q_{\lambda,\gamma}(\calX|\calX^{\old})+\frac{\mu}{2}\|\calX-\calY\|_F^2.\]
\end{proof}

\begin{defn}[\bf{Generalized t-SVT}]Suppose a 3-way tensor $\calY$ has t-SVD $\calY=\calU*\calS*\calV^*$, $\calW$ is a tensor with the same shape of $\calY$, the generalized tensor singular value thresholding operator is defined as 
\begin{equation}
\calD_{\calW}(\calY)=\calU*\tilde{\calS}*\calV^*,
\end{equation}
where $\tilde{\calS}=\textnormal{\ifft}(\calT_{\calW}(\overline{\calS}),3)$.
\end{defn}

\begin{thm}\label{thm:generalizedtSVTA}
For $\forall \mu>0$, let $\calW_{ijk}=\delta_{i}^j\varphi_{1,\gamma}'(\overline{\calS}^{\old}_{iik})/\mu$ where $\delta_i^j$ is the Kronecker symbol, then
\begin{equation}
\calD_{\calW}(\calY)=\arg\min_{\calX}Q_{\gamma}(\calX|\calX^{\old})+\frac{\mu}{2}\|\calX-\calY\|_F^2.
\end{equation}
\end{thm}

\begin{proof}
In fact,
\[\begin{aligned}
\arg\min_{\calX} Q_{\gamma}(\calX|\calX^{\old})+\frac{\mu}{2}\|\calX-\calY\|_F^2
=&\arg\min_{\calX}\frac{1}{n_3}\sum_{i,k}\varphi_{1,\gamma}'(\overline{\calS}^{\old}_{iik})\overline{\calS}_{iik}+\frac{\mu}{2}\|\calX-\calY\|_F^2\\
=&\arg\min_{\calX}\frac{1}{n_3}\sum_{i,k}\varphi_{1,\gamma}'(\overline{\calS}^{\old}_{iik})\overline{\calS}_{iik}+\frac{\mu}{2n_3}\sum_{k}\|\overline{X}^{(k)}-\overline{Y}^{(k)}\|_F^2\\
=&\arg\min_{\calX}\frac{1}{n_3}\sum_{k}\left(\sum_{i}\varphi_{1,\gamma}'(\overline{\calS}^{\old}_{iik})\overline{S}^{(k)}_{i,i}+\frac{\mu}{2}\|\overline{X}^{(k)}-\overline{Y}^{(k)}\|_F^2\right).
\end{aligned}\]
This optimization problem is separable in transformation domain with respect to each slice. Consider the subproblem of minimizing $\sum_{k}\varphi_{1,\gamma}'(\overline{\calS}^{\old}_{iik})\overline{S}^{(k)}_{i,i}+\frac{\mu}{2}\|\overline{X}^{(k)}-\overline{Y}^{(k)}\|_F^2$, suppose $\calY$ has t-SVD $\calY=\calU*\calR*\calV$ or equivalently $\overline{Y}=\overline{U}\,\overline{R}\,\overline{T}^*$, we have
\[\begin{aligned}
\sum_{k}\varphi_{1,\gamma}'(\overline{\calS}^{\old}_{iik})\overline{S}^{(k)}_{i,i}+\frac{\mu}{2}\|\overline{X}^{(k)}-\overline{Y}^{(k)}\|_F^2 
=&\sum_{k}\varphi_{1,\gamma}'(\overline{\calS}^{\old}_{iik})\overline{S}^{(k)}_{i,i}+\frac{\mu}{2}\|\overline{X}^{(k)}-\overline{U}^{(k)}\,\overline{R}^{(k)}\,(\overline{V}^{(k)})^*\|_F^2\\
=&\sum_{k}\varphi_{1,\gamma}'(\overline{\calS}^{\old}_{iik})\overline{S}^{(k)}_{i,i}+\frac{\mu}{2}\|(\overline{U}^{(k)})^*\overline{X}^{(k)}\overline{V}^{(k)}-\,\overline{R}^{(k)}\,\|_F^2 .
\end{aligned}\]
However, note that $\overline{S}^{(k)}_{i,i}$ is still the singular values of $(\overline{U}^{(k)})^*\overline{X}^{(k)}\overline{V}^{(k)}$, simple calculation reveals that we obtain the minimum if $(\overline{U}^{(i)})^*\overline{X}^{(k)}\overline{V}^{(k)}$ is diagonal with $i$-th diagonal element equals to $\calT_{\varphi_{1,\gamma}'(\overline{S^{\old}_{iik}})/\mu}(\overline{R}^{k}_{i,i})$. That is, let $W$ be a matrix with $W_{i,j}^{(k)}=\delta_i^j\varphi_{1,\gamma}'(\overline{S^{\old}_{iik}})/\mu$, then $\overline{X}=\overline{U}\calT_W{\overline{R}}\,\overline{V}$. Transform back to original domain, by the definition of generalized t-SVT, we get $\calX=\calU*\tilde{\calS}*\calV^*$ where $\tilde{\calS}=\ifft(\calT_{\calW}(\overline{\calR}),3)$ and $\calW_{ijk}=\delta_{i}^j\varphi_{1,\gamma}'(\overline{\calS}^{\old}_{iik})/\mu$.
\end{proof}

\section*{ADMM for Proposed Non-convex Tensor Recovery Models and Algorithms}
\begin{lem}\label{lemma}
	Suppose $\phi:I\subset\mathbb{R}\rightarrow \mathbb{R}$ is a monotone  non-decreasing differentiable concave function and $x_*\in I$ satisfies $x_*\in\arg\min_{x\in I}\phi(x_*)+\phi'(x_*)(x-x_*)$, then $x_*$ is a minimal point of $\phi(x)$ on $I$. 
\end{lem}
\begin{proof}
	Note that $\phi(x_*)\le \phi(x_*)+\phi'(x_*)(x-x_*)$ for $x\in I$ indicates $\phi'(x_*)(x-x_*)\ge 0,\forall x\in I$. Since $\phi$ is monotone non-decreasing, $(x-x_*)(\phi(x)-\phi(x_*))\ge 0$. Multiplying these two inequalities, we obtain $\phi'(x_*)(\phi(x)-\phi(x_*))(x-x_*)^2\ge 0$. Therefore, for any $x\ne x_*$, $\phi'(x_*)(\phi(x)-\phi(x_*))\ge 0$. If $\phi'(x_*)>0$, we  immediately have $\phi(x)\ge\phi(x_*)$. If $\phi'(x_*)=0$, then obviously $x_*$ is a minimizer. 
\end{proof}

\noindent{\bf Remark}: The SCAD and MCP functions are monotone non-decreasing on $[0,\infty)$, thus the lemma can be applied to them. Furthermore, $\Phi_{\lambda,\gamma}(\calX)$ is separable regarded as a function of $\calX$, and $\|\calX\|_{\gamma}$ is separable regarded as a function of $\{\overline{S}(i,i,k)\}_{i,k}$. Therefore, the lemma can also be applied to $\Phi_{\lambda,\gamma}(\calX)$ and $\|\calX\|_{\gamma}$. 
\subsection{ADMM for Non-convex Tensor Completion }
Consider problem
\begin{equation}
\min_{\calX\in D} Q_{\gamma}(\calM|\calX^{\old})\quad\text{s.t. } \calM=\calX.
\end{equation}
The augmented Lagrangian function is given by
\begin{equation}
L(\calM,\calX,\calY)=Q_{\gamma}(\calM|\calX^{\old})+\langle\calM-\calX,\calY\rangle+\frac{\mu}{2}\|\calM-\calX\|_F^2.
\end{equation}
According to the ADMM algorithm, we have the following iteration scheme:
\begin{gather}
\calM_{l+1} = \arg\min_{\calM} Q_{\gamma}(\calM|\calX^{\old})+\frac{\mu}{2}\|\calM-(\calX_l-\frac{1}{\mu}\calY_l)\|_F^2,\notag\\
\calX_{l+1} = \arg\min_{\calX\in D} \|\calX-(\calM_{l+1}+\frac{1}{\mu}\calY_l)\|_F^2,\\
\calY_{l+1} = \calY_k+\mu(\calM_{l+1}-\calX_{l+1}).\notag
\end{gather}
The sub-problem of updating $\calM_{l+1}$ can be solved by generalized t-SVT as indicated in Theorem \ref{thm:generalizedtSVTA}. The sub-problem of updating $\calX_{l+1}$ has a closed-form solution: $\calX_{l+1} =(\calM_{l+1}+\frac{1}{\mu}\calY)\circledast(1-\Omega)+\calO\circledast \Omega$, where $\circledast$ is element-wise product. 

\begin{thm}
	The iteration sequence generated by $\calX^{t+1}\in \arg\min_{\calO_{\Omega}=\calX_{\Omega}}Q_{\gamma}(\calX|\calX^t)$ is non-increasing, \ie, $\|\calX^{t+1}\|_{\gamma}\le \|\calX^t\|_{\gamma}$ and converges to some $Q^*$. Besides, there exists a subsequence $\{\calX^{i_k}\}_{k=1}^{\infty}$ converges to a minimal point $\calX_*$ of $\|\calX\|_{\gamma}$ on $\{\calX|\calO_{\Omega}=\calX_{\Omega}\}$.
\end{thm}
\begin{proof}
		Since $Q_{\gamma}(\calX^t|\calX^t)=\|\calX^t\|_{\gamma}$ and $\calX^{t+1}\in\arg\min_{\calO_{\Omega}=\calX_{\Omega}}Q_{\gamma}(\calX|\calX^t)$, we immediately have $Q_{\gamma}(\calX^{t+1}|\calX^t)\le \|\calX^t\|_{\gamma}$. On the other hand, $\|\calX^{t+1}\|_{\gamma}\le Q_{\gamma}(\calX^{t+1}|\calX^t)$, thus $\|\calX^{t+1}\|_{\gamma}\le \|\calX^t\|_{\gamma}$.
		
		 The sequence $\{\|\calX^t\|_{\gamma}\}_{l=0}^{\infty}$ is bounded below by $0$ and non-increasing, therefore converges to some limit point $Q^*\ge 0$.
		 
		  Besides, $\{\calX^t\}_{t=0}^{\infty}$ lie in the compact set $\{\calX|\|\calX\|_{\gamma}\le\|\calX^0\|_{\gamma}\}$, so there exists a subsequence $\{\calX^{i_k}\}_{k=1}^{\infty}$ that is convergent. Without loss of generality, we assume $\calX^{i_k}\rightarrow \calX_*$, then we must have $\calX_*\in\arg\min_{\calO_{\Omega}=\calX_{\Omega}}Q_{\gamma}(\calX|\calX_*)$. Then by Lemma \ref{lemma}, we conclude that $\calX_*$ is a minimizer of $\|\calX\|_{\gamma}$ on $\{\calX|\calO_{\Omega}=\calX_{\Omega}\}$.
\end{proof}

\subsection{ADMM for Non-convex Tensor Robust PCA}
Consider problem
\begin{equation}
\min_{\calL,\calE}Q_{\gamma_1}(\calL|\calL^{\old})+Q_{\lambda,\gamma_2}(\calE|\calE^{\old})\quad\text{s.t. }\calL+\calE=\calX.
\end{equation}
The augmented Lagrangian function is 
\begin{equation}
L(\calL,\calE,\calY) =Q_{\gamma_1}(\calL|\calL^{\old})+Q_{\lambda,\gamma_2}(\calE|\calE^{\old}) +\langle\calY,\calL+\calE-\calX\rangle+\frac{\mu}{2}\|\calL+\calE-\calX\|_F^2.
\end{equation}
According to the ADMM algorithm, we may iterate variables as following:
\begin{gather}
\calL_{l+1}\!=\!\arg\min_{\calL}Q_{\gamma_1}(\calL|\calL^{\old})+\frac{\mu}{2}\|\calL \!-\!(\calX \!-\!(\calE_l\!+\!\frac{1}{\mu}\calY_l))\|_F^2,\notag\\
\calE_{l+1}\!=\!\arg\min_{\calE}Q_{\lambda,\gamma_2}(\calE|\calE^{\old})\!+\!\frac{\mu}{2}\|\calE\!-\!(\calX\!-\!(\calL_{l+1}\!+\!\frac{1}{\mu}\calY_l))\|_F^2,\notag\\
\calY_{l+1}=\calY_l+\mu(\calL_{l+1}+\calE_{l+1}-\calX).
\end{gather}
The sub-problem of updating $\calL$ and $\calE$ has closed-form solutions using Theorem \ref{thm:generalizedthresholdingA} and Theorem \ref{thm:generalizedtSVTA}. 

\begin{thm}
	The iteration sequence generated by 
	\[(\calL^{t+1},\calE^{t+1})\in\arg\min_{\calL+\calE=\calX} Q_{\gamma_1}(\calL|\calL^t)+Q_{\lambda,\gamma_2}(\calE|\calE^t)\]
	is non-increasing, \ie, $\|\calL^{t+1}\|_{\gamma_1}+\Phi_{\lambda,\gamma_2}(\calE^{t+1}) \le \|\calL^{t}\|_{\gamma_1}+\Phi_{\lambda,\gamma_2}(\calE^{t})$ and converges to some $Q^*$. Besides, there exists a subsequence $\{(\calL^{i_k},\calE^{i_k})\}_{k=1}^{\infty}$ converges to a minimal point $(\calL_*,\calE_*)$ of $\|\calL\|_{\gamma_1}+\Phi_{\lambda,\gamma_2}(\calE)$ on $\{(\calL,\calE)|\calL+\calE=\calX\}$.
\end{thm}
\begin{proof}
	Since $Q_{\gamma_1}(\calL^t|\calL^t)=\|\calL^t\|_{\gamma},Q_{\lambda,\gamma_2}(\calE^t|\calE^t)=\Phi_{\lambda,\gamma_2}(\calE_l)$, and $(\calL^{t+1},\calE^{t+1})\in\arg\min_{\calL+\calE=\calX} Q_{\gamma_1}(\calL|\calL^t)+Q_{\lambda,\gamma_2}(\calE|\calE^t)$, we immediately have $Q_{\gamma_1}(\calL^{t+1}|\calL^t)+Q_{\lambda,\gamma_2}(\calE^{t+1}|\calE^t)\le \|\calL^t\|_{\gamma_1}+\le \Phi_{\lambda,\gamma_2}(\calE^t)$. On the other hand, $\|\calL^{t+1}\|_{\gamma_1}\le Q_{\gamma_1}(\calL^{t+1}|\calL^t)$ and $\Phi_{\lambda,\gamma_2}(\calE^{t+1}) \le Q_{\lambda,\gamma_2}(\calE^{t+1}|\calE^t)$, thus $\|\calL^{t+1}\|_{\gamma_1}+\Phi_{\lambda,\gamma_2}(\calE^{t+1})\le \|\calL^t\|_{\gamma_1}+\Phi_{\lambda,\gamma_2}(\calE^t)$.
	 
	 The sequence $\{\|\calL^t\|_{\gamma_1}+\Phi_{\lambda,\gamma_2}(\calE^t)\}_{t=0}^{\infty}$ is bounded below by $0$ and non-increasing, therefore converges to some limit point $Q^*\ge 0$. 
	 
	 Besides, $\{(\calL^t,\calE^t)\}_{t=0}^{\infty}$ lie in the compact set $\{(\calL,\calE)|\|\calL\|_{\gamma_1}+\Phi_{\lambda,\gamma_2}(\calE)\le\|\calL^0\|_{\gamma_1}+\Phi_{\lambda,\gamma_2}(\calE^0)\}$, so there exists a subsequence $\{(\calL^{i_k},\calE^{i_k})\}_{k=1}^{\infty}$ that is convergent. Without loss of generality, we assume $(\calL^{i_k},\calE^{i_k})\rightarrow (\calL_*,\calE_*)$, then we must have $(\calL_*,\calE_*)\in\arg\min_{\calL+\calE=\calX}Q_{\gamma_1}(\calL|\calL_*)+Q_{\lambda,\gamma_2}(\calE|\calE_*)$. Then by Lemma \ref{lemma}, we conclude that $(\calL_*,\calE_*)$ is a minimizer of $\|\calL\|_{\gamma_1}+\Phi_{\lambda,\gamma_2}(\calE)$ on $\{(\calL,\calE)|\calL+\calE=\calX\}$.
\end{proof}

\end{document}